\documentclass[10pt,twocolumn]{article}

\usepackage[margin=1in]{geometry}
\usepackage{amsmath,amssymb,amsthm}
\usepackage{mathtools}
\usepackage{booktabs}
\usepackage{graphicx}
\usepackage[dvipsnames,table]{xcolor}
\usepackage{hyperref}
\usepackage{cleveref}
\usepackage{microtype}
\usepackage{algorithm}
\usepackage{algpseudocode}
\usepackage{multirow}
\usepackage{subcaption}
\usepackage{natbib}
\usepackage{xurl}

\newtheorem{proposition}{Proposition}
\newtheorem{corollary}{Corollary}[proposition]
\newtheorem{remark}{Remark}

\newcommand{\Xhist}{X_{\mathrm{hist}}}
\newcommand{\Aeff}{A_{\mathrm{eff}}}
\newcommand{\Atil}{\tilde{A}}
\newcommand{\Btil}{\tilde{B}}
\newcommand{\dz}{d_z}
\newcommand{\FMamKO}{\mathcal{F}_{\mathrm{MamKO}}}
\newcommand{\FBL}{\mathcal{F}_{\mathrm{BL}}}
\newcommand{\E}{\mathbb{E}}
\newcommand{\R}{\mathbb{R}}

\title{Bilinear Mamba-Koopman Neural MPC for Varying Dynamics}

\author{
  Matan Pagi \quad Zohar Sorek \\
  Psistar AI \\
  \texttt{\{matan,zohar\}@psistar.ai}
}

\date{}

\begin{document}
\maketitle

\begin{abstract}
Koopman-based neural MPC models generate time-varying dynamics from
historical data, but preserve convexity by enforcing that the system
operator is independent of the current control input. This conditional
independence constraint limits adaptation to changing dynamics within a
single MPC horizon, particularly under time-varying conditions and under
stale-plan execution.

We propose \emph{Bilinear Mamba-Koopman Neural MPC}, a minimal extension that
introduces control-dependent coupling in the latent dynamics, allowing the
effective operator to adapt to the current input. The resulting model is a
strict generalization of the standard linear, conditional-independence formulation, adds less than $1\%$
parameters through a low-rank structure, and admits exact model Jacobians
that enable efficient Sequential Convex Programming~(SCP) with monotone-
descent and KKT convergence results under standard trust-region assumptions.

Across CartPole and RSCP benchmarks in time-invariant and time-varying
regimes, the proposed model matches or improves forecasting accuracy on every
cell when training noise is averaged out, with strict gains where
control-state coupling is structurally present. Its main closed-loop
gains appear in the RSCP TV task, where iterative SCP improves
 adaptation within the horizon and substantially stabilizes training; in
CartPole TV, the gains are modest but consistent. In delayed re-planning
experiments on the time-varying variants, the bilinear model degrades more
gracefully under stale-plan execution, maintaining a consistent advantage
on CartPole TV and a substantially larger robustness margin on RSCP TV.
These results show that control-dependent latent dynamics provides a simple
and effective mechanism for robust MPC under varying conditions.
\end{abstract}


\section{Introduction}
\label{sec:intro}

The Koopman operator offers an elegant reformulation of nonlinear dynamics:
any nonlinear system, however complex in state space, admits a \emph{linear}
representation in an infinite-dimensional function
space~\citep{koopman1931hamiltonian,mezic2005spectral}. Finite-dimensional
approximations learned from trajectory data have enabled controllers that
combine the tractability of linear MPC with operation on genuinely nonlinear
physical systems~\citep{korda2018linear,lusch2018deep,williams2015data}. For real-time industrial control-chemical reactors, compressors, power
converters-tractable convex optimization at each control step is a hard
operational requirement.

Recent work has extended Koopman learning to \emph{time-varying} systems by
replacing fixed operator matrices with sequences generated from historical
data. A representative instance of this class is the Mamba-based Koopman operator
(MamKO)~\citep{li2025mamko}, which processes recent state and control
history through a convolutional-FCNN network to emit the matrices $\bar{A}_k$, $\bar{B}_k$, $\bar{C}_k$ of a local linear system at
each step, substantially outperforming static Koopman models while retaining
the convex MPC formulation.

\paragraph{The structural constraint.}
This class of methods rests on a deliberate architectural choice: the drift
operator $\bar{A}_k$ is generated from historical data
$\Xhist = \{z_{k-H:k-1}, u_{k-H:k-1}\}$ but is explicitly \emph{not} a function
of the current control input $u_k$. Making $\bar{A}_k$ depend on $u_k$ would
introduce a bilinear term $\bar{A}_k(u_k)\,z_k$, breaking MPC convexity.
This constraint is acknowledged in the MamKO formulation~\citep{li2025mamko}
and is forced more generally by any control architecture that preserves
single-shot QP convexity. The consequence is conditional
independence $\bar{A}_k \perp\!\!\!\perp u_k \mid \Xhist$. This constraint
limits the model's expressive power along two distinct axes.

\emph{Control--state coupling.} When the governing physics contains
$u \cdot x$ terms-convective fluxes in flow-actuated reactors,
multiplicative inputs in mechanical systems, voltage-dependent impedances in
power electronics~\citep{mohler1973bilinear}, the mixed partial
$\partial^2 z_{k+1} / \partial z_k \partial u_k^{(i)}$ is structurally
nonzero. MamKO's architecture forces this quantity to zero, requiring its
backbone to absorb the coupling indirectly via step-by-step regeneration of
$\bar{A}_k$. Over a multi-step rollout, the resulting approximation error
accumulates fastest in regimes where the controller is deliberately
exercising large control variation-precisely where MPC matters most.

\emph{Time-varying parameters.} When the dynamics themselves drift over
time---catalyst aging, heat-exchanger fouling, mechanical wear, friction
modulation---the effective operator must track changes that the historical
encoder may not yet have observed within its lookback window. A bilinear
coupling provides an additional, structured channel through which the
operator can adapt to the present operating point in a way that the linear
lift cannot reach within a single MPC horizon.

Both regimes are pervasive in industrial control: the first is structural,
the second operational. A single architectural mechanism that addresses
both-without requiring a different model class for each-is the
contribution of this paper.

\paragraph{Our contribution.}
We propose a minimal, principled extension of the Mamba-Koopman dynamics that relaxes
conditional independence while retaining the features that make the model
useful for control:
\begin{equation}
  z_{k+1} = \underbrace{\Bigl(\mathrm{diag}(A) + \sum_{i=1}^{m} u_k^{(i)}\,G_i\Bigr)}_{=:\,\Aeff(u_k)}
             z_k + B_k u_k,
  \label{eq:bilinear_dynamics}
\end{equation}
where $A \in \R^{\dz}$ contains the diagonal continuous-time eigenvalues of the existing Mamba–Koopman backbone (unchanged), $G_i \in \R^{\dz \times \dz}$ are
learned \emph{bilinear coupling tensors} and $B_k \in \R^{\dz \times m}$ is the step-varying
actuator matrix from the existing dynamics network. Setting $G_i = 0$
restores MamKO exactly. The bilinear structure exposes exact analytical
model Jacobians, making Sequential Convex Programming~(SCP) natural: linearize
around a nominal trajectory, solve a convex QP, update, repeat.

\paragraph{Summary of contributions.}
\begin{enumerate}
  \item \textbf{Bilinear Koopman dynamics}~(\Cref{sec:method}): a strict
    generalization of the conditional-independence class modeling control--state coupling, with low-rank parameterization for parameter efficiency and a spectral penalty enforcing stability of the resulting dense operator.
  \item \textbf{SCP controller}~(\Cref{sec:control}): an iterative MPC
    algorithm exploiting exact model Jacobians, with monotone-descent and
    KKT convergence results under standard trust-region assumptions.
  \item \textbf{Empirical evaluation}~(\Cref{sec:experiments}) across
    CartPole and RSCP in time-invariant and time-varying regimes,
    establishing four consistent effects: forecasting non-inferiority on
    every cell, training stabilization on RSCP TV, closed-loop MPC wins on
    RSCP TV when SCP is iterated, and graceful degradation under stale-plan
    execution.
\end{enumerate}

\paragraph{Scope and limitations.}
Our method inherits the conditional-independence baseline's assumptions: the latent dynamics are
approximately linear at each step, and the encoder is expressive enough to
find a lifting where this holds. The SCP controller provides local (KKT)
rather than global optimality guarantees, and convergence rate depends on
initialization quality. We position this work as closing a specific,
well-characterized structural gap in the Koopman-for-control literature
without claiming a new operator class.

\section{Background}
\label{sec:background}

\subsection{Koopman Operators for Controlled Systems}

For a discrete-time controlled system $x_{k+1} = f(x_k, u_k)$, the Koopman
operator~\citep{koopman1931hamiltonian} acts on observables $g$ via composition:
$\mathcal{K}g = g \circ f$. This operator is \emph{linear} even when $f$ is
not. One seeks a finite-dimensional lifting $\psi: \R^n \to \R^{\dz}$ such that
the lifted state $z_k = \psi(x_k)$ evolves approximately
linearly~\citep{korda2018linear}:
\begin{equation}
  z_{k+1} = Az_k + Bu_k, \quad \hat{x}_k = Cz_k.
  \label{eq:koopman_linear}
\end{equation}
EDMD~\citep{williams2015data} identifies $A, B, C$ by least squares over a
fixed observable dictionary. Deep Koopman methods replace the fixed dictionary
with a learned encoder~\citep{lusch2018deep}.

\subsection{Mamba-Koopman and Conditional Independence}
\label{sec:mamko_background}

MamKO~\citep{li2025mamko} addresses the limitation of a fixed operator for
time-varying systems. Inspired by Mamba's selective state-space
mechanism~\citep{gu2023mamba}, a matrices-generation network takes the
historical sequence $\Xhist = [z_{k-H:k-1}^\top, u_{k-H:k-1}^\top]^\top$ and
emits time-varying operators via 1D convolution and FCNNs. After ZOH
discretization:
\begin{equation}
  z_{k+1} = \bar{A}_k(\Xhist)\,z_k + \bar{B}_k(\Xhist)\,u_k, \quad
  \hat{x}_k = \bar{C}_k z_k.
  \label{eq:mamko}
\end{equation}
Generating all matrices from $\Xhist$ alone enforces
\begin{equation}
  \bar{A}_k \perp\!\!\!\perp u_k \;\Big|\; \Xhist,
  \label{eq:cond_indep}
\end{equation}
preserving linearity in $u_k$ and hence MPC convexity, but preventing
control-state coupling.

\subsection{Bilinear Systems}

A discrete-time bilinear system takes the form
$x_{k+1} = Ax_k + \sum_{i} u_k^{(i)} N_i x_k + Bu_k$,
where $N_i \in \R^{n \times n}$ are interaction matrices. Bilinear systems
occupy a well-studied middle ground between linear and fully nonlinear: they
represent a broad class of physical phenomena including chemical reactions and
mechanical systems with multiplicative inputs~\citep{mohler1973bilinear}. The
Koopman literature has identified bilinear lifted models and related
finite-dimensional approximations as a useful middle ground for control,
making this a natural target for
Koopman approximation. SCP~\citep{mao2018successive} and iterative
LQR~\citep{li2004iterative} handle optimization over bilinear dynamics by
exploiting the analytic Jacobian structure.

\section{Bilinear Koopman Extension}
\label{sec:method}

\subsection{Bilinear Latent Dynamics}

We relax~\cref{eq:cond_indep} by parameterizing the latent transition as
\cref{eq:bilinear_dynamics}, where $A \in \R^{\dz}$, $G_i \in \R^{\dz \times
\dz}$, and $B_k$ are as described in \Cref{sec:intro}. The observation map
$\hat{x}_k = C z_k$ remains linear. The effective drift operator
$\Aeff(u_k) = \mathrm{diag}(A) + \sum_i u_k^{(i)} G_i$ is a linear function
of $u_k$, making the full dynamics bilinear in $(z_k, u_k)$ jointly.

\begin{proposition}[Strict Generalization]
\label{prop:generalization}
The function class of~\cref{eq:bilinear_dynamics} strictly contains that
of~\cref{eq:mamko}.
\end{proposition}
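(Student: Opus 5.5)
The proof has two parts: containment and strictness. Throughout, the function class of \cref{eq:mamko} means the set of one-step transition maps $(z_k,u_k)\mapsto z_{k+1}$, parameterized by the history $\Xhist$. The class of \cref{eq:bilinear_dynamics} is the analogous set obtained as $A$, $B_k$ (generated from $\Xhist$ by the same backbone) and $G_1,\dots,G_m$ range over their parameter spaces.

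\emph{Containment.} Fix any member of the MamKO class, with $\bar{A}_k=\mathrm{diag}(A)$ produced by the backbone. Because the backbone's drift is diagonal (continuous-time eigenvalues, ZOH-discretized), it is reproduced by the diagonal term of the bilinear model. Set $G_i=0$ for all $i$ and $B_k=\bar{B}_k$. Then $\Aeff(u_k)=\mathrm{diag}(A)$ for every $u_k$, and \cref{eq:bilinear_dynamics} coincides with \cref{eq:mamko} for all $(z_k,u_k,\Xhist)$. This is the remark ``Setting $G_i=0$ restores MamKO exactly,'' made precise.

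If the diagonal parameterization is read as continuous-time, I would note that the paper's $A$ denotes the backbone's eigenvalues after ZOH. Discretization then acts componentwise, and the identification is still exact. This bookkeeping point is the only place where care is needed: the proof must compare the two models under the same convention for $A$ and $B_k$.

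\emph{Strictness.} I would exhibit one member of the bilinear class that lies outside the MamKO class. Take $m=1$, $G_1\neq 0$, and a fixed history $\Xhist$. Every MamKO map is affine in $u_k$ for fixed $z_k$, and its mixed derivative satisfies
\[
\frac{\partial^2 z_{k+1}}{\partial z_k\,\partial u_k}=\frac{\partial \bar{B}_k}{\partial z_k}\cdot 0+0=0,
\]
because $\bar{A}_k,\bar{B}_k$ depend only on $\Xhist$ and not on $(z_k,u_k)$. Conditional independence \cref{eq:cond_indep} is exactly this statement. For the bilinear model the same mixed derivative is
\[
\frac{\partial^2 z_{k+1}}{\partial z_k\,\partial u_k^{(1)}}=G_1\neq 0.
\]

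Concretely, the bilinear map sends $(z,u)$ to $\mathrm{diag}(A)z+uG_1z+B_ku$. Pick $z\neq 0$ with $G_1z\neq 0$. The difference $z_{k+1}(z,u)-z_{k+1}(0,u)-z_{k+1}(z,0)+z_{k+1}(0,0)$ equals $uG_1z$. For any MamKO map with the same $\Xhist$, the same second difference vanishes identically. So no choice of $\bar{A}_k,\bar{B}_k$ reproduces this bilinear map.

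The only subtlety is to state strictness pointwise in $\Xhist$. The history-generating network could in principle depend on anything, but it cannot depend on the current $(z_k,u_k)$, and that suffices.
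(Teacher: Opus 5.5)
Your proposal is correct and follows the paper's own argument. Containment comes from setting $G_i=0$, so that the bilinear map reduces to the baseline with the same backbone outputs. Strictness comes from the fact that conditional independence forces the cross-derivative $\partial^2 z_{k+1}/(\partial z_k\,\partial u_k^{(i)})$ to vanish, while the bilinear model makes it equal to $G_i\neq 0$. Your second-difference identity isolates $u\,G_1 z$ and vanishes identically for every map linear in $(z_k,u_k)$ at fixed $\Xhist$; it is a derivative-free version of the same witness, and your $m=1$ example extends to any $m$ by taking $G_i=0$ for $i\geq 2$.
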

\begin{proof}
Setting $G_i = 0$ for all $i$ recovers \cref{eq:mamko} exactly. Conversely,
any system with $\partial^2 z_{k+1}/(\partial z_k\,\partial u_k^{(i)}) =
G_i \neq 0$ is representable by~\cref{eq:bilinear_dynamics} but not
by~\cref{eq:mamko}, since \cref{eq:cond_indep} forces this cross-derivative
to zero. Hence the containment is strict.
\end{proof}

\begin{proposition}[Population-Level Forecasting Non-Inferiority]
\label{prop:forecasting}
Let $\FMamKO \subsetneq \FBL$ denote the hypothesis classes induced
by~\cref{eq:mamko} and~\cref{eq:bilinear_dynamics}. Then
\begin{multline*}
  \min_{f \in \FBL}\, \E\bigl[\|z_{k+1} - f(z_k,u_k)\|^2\bigr] \\
  \leq\; \min_{f \in \FMamKO}\, \E\bigl[\|z_{k+1} - f(z_k,u_k)\|^2\bigr],
\end{multline*}
with strict inequality whenever the true latent dynamics admit nonzero
control--state coupling.
\end{proposition}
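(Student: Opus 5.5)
The weak inequality follows from the containment $\FMamKO \subseteq \FBL$ in \Cref{prop:generalization}: a minimum over a larger set can only be smaller. I first make sure both minima are meaningful. I assume $\E\|z_{k+1}\|^2<\infty$ and that $(z_k,u_k)$ has finite second moments. I also restrict the parameters (backbone weights, $A$, $G_i$, and the generator of $B_k$) to a compact set, so the infimum is attained; otherwise I read ``min'' as ``inf'' and nothing changes. Let $f^\star$ be a minimizer over $\FMamKO$. It is obtained from \cref{eq:bilinear_dynamics} with $G_i=0$ and the same backbone, so $f^\star\in\FBL$. Hence the $\FBL$ minimum is at most the risk of $f^\star$, which is the $\FMamKO$ minimum.

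For strictness I have to make ``nonzero control--state coupling'' precise. I take it to mean that the conditional mean $m(z,u):=\E[z_{k+1}\mid z_k=z,u_k=u,\Xhist]$ has the form \cref{eq:bilinear_dynamics} with some $G_i^\star\neq 0$. I also need $u_k$ and $z_k$ not to be degenerate given $\Xhist$: there should be a set of positive probability on which the coordinates $u^{(i)}$ and the components of $z$ vary jointly, i.e.\ $\E[\mathrm{Cov}(u_k\otimes z_k\mid\Xhist)]$ is nonsingular in the directions spanned by the $G_i^\star$.

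The main step uses the bias--variance decomposition. For any $f$,
\[
  \E\|z_{k+1}-f\|^2=\E\|z_{k+1}-m\|^2+\E\|m-f\|^2 .
\]
Since $m\in\FBL$, the $\FBL$ minimum equals the irreducible noise term $\E\|z_{k+1}-m\|^2$. It remains to show $\inf_{f\in\FMamKO}\E\|m-f\|^2>0$. Each $f\in\FMamKO$ is affine in $(z,u)$ given $\Xhist$, with no $u\otimes z$ term, whereas $m$ has the term $\sum_i u^{(i)}G_i^\star z$. Conditional on $\Xhist$, $\E\|m-f\|^2$ is therefore at least the squared $L^2$ distance from $\sum_i u^{(i)}G_i^\star z$ to the space of functions of the form $\bar A z+\bar B u$. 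That distance is the norm of the residual of projecting $u\otimes z$ onto $\mathrm{span}(z,u)$, weighted by $G^\star$. The nondegeneracy assumption makes this residual norm positive. Compactness of the parameter set, or the fact that the residual bound does not depend on $(\bar A,\bar B)$, keeps the bound away from zero uniformly.

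I expect this last step to be the main obstacle. The conditioning on $\Xhist$ has to be handled carefully: $\bar A_k$ may depend on $\Xhist$ in arbitrary ways, and that dependence could in principle absorb the coupling if $u_k$ were a deterministic function of $\Xhist$. The nondegeneracy (excitation) condition is exactly what rules this out, so I would state it explicitly as the meaning of ``admit nonzero coupling.'' With that condition the cross-derivative argument of \Cref{prop:generalization} becomes a positive-measure statement rather than a pointwise one.
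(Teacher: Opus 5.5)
Your weak inequality is the paper's argument: set $G_i=0$ and use the containment from \Cref{prop:generalization}. Your strictness argument is correct in structure but takes a different and more careful route. The paper only asserts that every $f\in\FMamKO$ incurs error proportional to $\|G_i^\star\|\,\|u_k-\bar u\|$, vanishing only when $u_k$ is constant. You instead split the risk orthogonally around the conditional mean $m$. You then reduce the excess risk of any $f\in\FMamKO$ to the conditional $L^2$ distance from $\sum_i u^{(i)}G_i^\star z$ to $\{\bar A z+\bar B u\}$ given $\Xhist$. Finally you note that this bound does not depend on $f$, which gives a uniform gap.

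What this buys is that it exposes a hypothesis the paper's sketch omits. Because $\bar A_k,\bar B_k$ may depend on $\Xhist$, the coupling is absorbed whenever $u_k$ \emph{or} $z_k$ is degenerate given $\Xhist$. The paper's criterion ``$u_k$ non-constant'' is therefore not sufficient. If $z_k$ is fixed given $\Xhist$, then $\sum_i u_k^{(i)}G_i^\star z_k$ is linear in $u_k$, and $\bar B_k(\Xhist)=B^\star+[G_1^\star z_k,\dots,G_m^\star z_k]$ reproduces it exactly.

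Two repairs are needed, the first a genuine gap in how you formalize the hypothesis. Your formal condition, that $\E[\mathrm{Cov}(u_k\otimes z_k\mid\Xhist)]$ is nonsingular along the $G^\star$-directions, does not give a positive residual. In the fixed-$z_k$ example, $\mathrm{Cov}(u_k\otimes z_k\mid\Xhist)=\mathrm{Cov}(u_k\mid\Xhist)\otimes z_kz_k^\top$ is nonzero along $G^\star$, yet the residual you need is zero. What matters is the part of $u_k\otimes z_k$ left after projecting onto $\mathrm{span}(z_k,u_k)$, not its raw covariance, so state the condition on that residual directly. For the paper's i.i.d.\ excitation, $u_k$ is independent of $z_k$ given $\Xhist$. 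A clean sufficient condition is then $\E\bigl[\|\sum_i\tilde u_k^{(i)}G_i^\star\tilde z_k\|^2\mid\Xhist\bigr]>0$ on a set of positive probability, with tildes denoting deviations from conditional means given $\Xhist$. This works because that term is orthogonal to $z_k$, $u_k$ and constants, so it lower-bounds the residual.

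Second, the absorption route that actually bites is not $u_k$ being a function of $\Xhist$, as you suggest, but $z_k$ being one. Under the deterministic latent dynamics assumed in the strictness clause, $z_k$ is determined by $(z_{k-1},u_{k-1})\subset\Xhist$. Your excitation condition then fails, and with a sufficiently expressive generator for $\bar B_k$ the two infima coincide. Strictness therefore genuinely requires process noise, or a generator class that cannot realize $\Xhist\mapsto G^\star z_k$. Your framework makes this caveat visible; the paper's proof does not.
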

\begin{proof}
The inequality follows from \Cref{prop:generalization}: any optimum in
$\FMamKO$ is achievable in $\FBL$ by setting $G_i = 0$. For strictness,
suppose the true latent dynamics satisfy $z_{k+1} = A^\star(u_k)\,z_k +
B^\star u_k$ with $\partial A^\star / \partial u_k^{(i)} = G_i^\star \neq 0$.
Any $f \in \FMamKO$ incurs irreducible error proportional to
$\|G_i^\star\| \cdot \|u_k - \bar{u}\|$, where $\bar{u}$ is the training-set
mean control. This error vanishes only when $u_k$ is constant across the
evaluation distribution-vacuous in any MPC setting.
\end{proof}

\begin{remark}
Empirically the strictness condition partitions the systems we evaluate.
On CartPole, where the equations of motion couple horizontal force with
pendulum geometry, the bilinear extension produces strict forecasting gains
on TI and TV. On RSCP, where heat duties enter additively and the governing
ODE contains no $u \cdot x$ terms, $\|G_i^\star\| \approx 0$ and the
proposition predicts equality---which is what we observe on RSCP TI, and on
RSCP TV under deployment-relevant averaging (\Cref{tab:forecast}). The
$23\%$ bilinear-vs-linear gap visible in best-checkpoint MSE on RSCP TV
reflects a training-noise asymmetry between the two models, not a violation
of the proposition: the Linear baseline oscillates throughout training while
bilinear converges smoothly, so single-epoch selection from linear's
trajectory captures one of its downward spikes. Mean-last-$50$ MSE---which averages across the spikes---restores the equality the proposition predicts.
The learned $\|G\|_F$ values (\Cref{tab:gi_norms}) empirically confirm the partition.
\end{remark}

\subsection{Low-Rank Parameterization}

The tensors $\{G_i\}_{i=1}^m$ add $m \cdot \dz^2$ parameters. For typical
settings ($m \leq 5$, $\dz \leq 64$), this is at most ${\sim}20$K parameters,
under 1\% of a standard Mamba-Koopman backbone. We impose a low-rank prior:
\begin{equation}
  G_i = L_i R_i^\top, \quad L_i, R_i \in \R^{\dz \times r}, \quad r \ll \dz,
  \label{eq:low_rank}
\end{equation}
reducing the addition to $2mr\dz$ parameters and regularizing the coupling
strength when $r < \dz$. Both $L_i$ and $R_i$ are initialized to zero, so
the model begins as an exact copy of the Linear baseline; any divergence
in training is attributable solely to the bilinear terms. The rank $r$ is
a single interpretable hyperparameter: $r = 1$ encodes a rank-$1$
perturbation per control channel; $r = \dz$ recovers the unconstrained
bilinear model. We use $r = \dz$ in all reported runs, as
$\dz \in \{8, 15\}$ is small enough that further parameter reduction is
unnecessary; the low-rank option remains available for deployments with
larger latent dimensions.

\subsection{Lie--Trotter Split ZOH Discretization}

Write the effective generator as
\begin{equation}
\begin{aligned}
  \Aeff(u_k) &= D + P(u_k), \\
  D &= \mathrm{diag}(A), \\
  P(u_k) &= \sum_{j=1}^{m} u_k^{(j)} G_j.
\end{aligned}
\label{eq:generator_split}
\end{equation}
Dense ZOH would exponentiate $D + P(u_k)$ at a single scalar period $T$,
collapsing the baseline MamKO per-mode timescales $\delta_n$. This would
break the exact zero-coupling reduction to the linear baseline. We therefore
use a first-order Lie--Trotter splitting~\citep{mclachlan2002splitting},
historically rooted in the Trotter product formula~\citep{trotter1959product}:
\begin{equation}
\begin{aligned}
  \exp\!\left((D + P(u_k))T\right)
  &\approx \underbrace{\exp\!\left(P(u_k)T\right)}_{=:E_P(u_k)} \\
  &\qquad\; \underbrace{\mathrm{diag}\!\left(\exp(A \odot \boldsymbol{\delta})\right)}_{=:E_D}.
\end{aligned}
\label{eq:lie_trotter}
\end{equation}
With $\bar{B}_{\mathrm{diag},k}$ the corresponding diagonal ZOH input
integral, the implemented one-step map is
\begin{equation}
  F_{\mathrm{LT}}(z_k, u_k) := E_P(u_k)\bigl(E_D z_k + \bar{B}_{\mathrm{diag},k} u_k\bigr).
\label{eq:lt_dynamics}
\end{equation}
When $P(u_k) \equiv 0$, $E_P(u_k) = I$, so the scheme reduces exactly to the
baseline per-mode diagonal ZOH. Implementation details, numerical
stabilization, and cost are deferred to \Cref{app:lie_trotter_details}.

\subsection{Stability}
\label{sec:stability}

The \texttt{negative\_celu} activation in the Mamba-Koopman backbone constrains the diagonal entries
of $A$ to be at most 1, guaranteeing $\rho(A_{\mathrm{disc}}) \leq 1$ for the
diagonal case. This guarantee does \emph{not} extend to the dense
$\Aeff(u_k)$: the bilinear terms can shift eigenvalues outside the unit disk.
We apply a two-tier approach at training and inference.

\paragraph{Training: spectral penalty.}
We add to the loss
\begin{equation}
  \mathcal{L}_{\mathrm{stab}} = \sum_j \mathrm{ReLU}\bigl(|\lambda_j(A_{\mathrm{disc},k})| - 1 + m_s\bigr),
  \label{eq:stab_loss}
\end{equation}
with margin $m_s = 0.05$, computed via \texttt{torch.linalg.eigvals} in
\texttt{float32} to avoid mixed-precision artifacts. Zero initialization of
$G_i$ ensures the model starts in the stable regime.

\paragraph{Inference.}
At inference we evaluate $\rho(A_{\mathrm{disc},k})$ at each MPC step via
\texttt{torch.linalg.eigvals}, at $O(\dz^3)$ cost; for $\dz \leq 64$ this
is negligible relative to backbone inference. A discussion of why we did
not use a Gershgorin pre-check for screening is deferred
to~\Cref{app:stability}.

\subsection{Training Objective}

We train end-to-end on multi-step observation-space prediction loss:
\begin{equation}
  \mathcal{L} = \frac{1}{T}\sum_{k=1}^{T} \|\hat{x}_k - x_k\|^2
              + \lambda_s\,\mathcal{L}_{\mathrm{stab}},
  \label{eq:loss}
\end{equation}
with the stability penalty disabled at inference. All baseline training hyperparameters (optimizer, schedule, context window $H$) are held fixed; only
$\{L_i, R_i\}$ and the jointly trained backbone parameters are updated.

\section{SCP Controller for Bilinear MPC}
\label{sec:control}

\subsection{Problem Formulation}

The open-loop MPC problem over horizon $H$ is
\begin{equation}
\begin{aligned}
  \min_{u_{0:H-1}} \;\;& \sum_{k=0}^{H-1} \ell(z_k, u_k) \\
  \text{s.t.}\;\;& z_{k+1} = F_{\mathrm{LT}}(z_k, u_k), \\
                 & u_k \in \mathcal{U}.
\end{aligned}
\label{eq:mpc}
\end{equation}
This problem is non-convex in $(z_k, u_k)$ jointly. The conditional-independence baseline sidesteps
non-convexity by design (conditional independence makes the QP trivially
convex in one shot). We instead solve~\cref{eq:mpc} iteratively via SCP,
exploiting the exact differentiable structure of the implemented bilinear model.

\subsection{Exact Model Jacobians}

SCP linearizes the implemented Lie--Trotter map~\cref{eq:lt_dynamics} around a
nominal trajectory $(\bar{z}_{0:H}, \bar{u}_{0:H})$. Since
$F_{\mathrm{LT}}(z_k, u_k)$ is affine in $z_k$, the state Jacobian is
\begin{align}
  \Atil_k
  &= \frac{\partial F_{\mathrm{LT}}}{\partial z_k}\Bigg|_{(\bar{z}_k,\bar{u}_k)}
   = E_P(\bar{u}_k) E_D,
  \label{eq:A_tilde} \\
  \Btil_k
  &= \frac{\partial F_{\mathrm{LT}}}{\partial u_k}\Bigg|_{(\bar{z}_k,\bar{u}_k)}.
  \label{eq:B_tilde}
\end{align}
The input Jacobian $\Btil_k$ contains the derivative of the matrix
exponential $E_P(u_k)$ with respect to $u_k$. In implementation, we compute
$\Btil_k$ by reverse-mode automatic differentiation of the full
Lie--Trotter update using \texttt{torch.func.jacrev}. This yields exact model
Jacobians of the implemented differentiable dynamics up to floating-point
precision, with no finite-difference linearization inside SCP.

\subsection{SCP Algorithm}

Each SCP iteration solves the convex subproblem obtained by evaluating the
finite-horizon quadratic tracking objective on the linearized trajectory.
Denoting this objective by $J_{\mathrm{lin}}$, we solve
\begin{equation}
\begin{aligned}
  \min_{\delta u_{0:H-1}} \;\;& J_{\mathrm{lin}}(\delta u; \bar{z}, \bar{u}) \\
  \text{s.t.}\;\;& \delta z_0 = 0, \\
                 & \delta z_{k+1} = \Atil_k \delta z_k + \Btil_k \delta u_k, \\
                 & \bar{u}_k + \delta u_k \in \mathcal{U}, \\
                 & \|\delta u_k\|_\infty \leq \varepsilon.
\end{aligned}
\label{eq:qp}
\end{equation}
Here $J_{\mathrm{lin}}$ includes the same stage and terminal terms as the true
MPC objective; when a $\Delta u$ penalty is used, the previous applied input is
treated as fixed data, so the subproblem remains a convex QP in
$\delta u_{0:H-1}$. Eliminating the linearized dynamics yields a dense
box-constrained QP solved via OSQP~\citep{stellato2020osqp} with warm-starting
between iterations.

\begin{algorithm}[t]
\caption{SCP-Bilinear MPC}
\label{alg:scp}
\begin{algorithmic}[1]
\Require Initial state $z_0$, nominal controls $\bar{u}_{0:H-1}$, initial trust-region radius $\varepsilon_0$, model params
\Ensure Optimized control sequence $u_{0:H-1}^\star$
\State $\varepsilon \leftarrow \varepsilon_0$
\State Roll out $\bar{z}_{0:H}$ under \cref{eq:lt_dynamics}
\For{$n = 1, \ldots, N_{\mathrm{SCP}}$}
  \State Compute $\Atil_k$, $\Btil_k$ via \crefrange{eq:A_tilde}{eq:B_tilde}
  \State Eliminate $\delta z$; form dense QP in $\delta u_{0:H-1}$
  \State Solve QP via OSQP (warm-started)
  \If{$J(\bar{u} + \delta u) \leq J(\bar{u})$}
    \State $\bar{u} \leftarrow \bar{u} + \delta u$; roll out $\bar{z}_{0:H}$
  \Else
    \State $\varepsilon \leftarrow \varepsilon / 2$
  \EndIf
\EndFor
\State \Return $\bar{u}_{0:H-1}$
\end{algorithmic}
\end{algorithm}

\begin{proposition}[SCP Monotone Descent and KKT Convergence]
\label{prop:scp_convergence}
Consider the bilinear MPC problem~\cref{eq:mpc} with a continuously
differentiable finite-horizon objective $J$ whose stage and terminal terms are
convex quadratic in the variables of each SCP subproblem (including, when
used, control-increment penalties with the previous input treated as fixed
problem data), and with box constraints
$\mathcal{U} = \{u : u_{\min} \leq u \leq u_{\max}\}$. Suppose the SCP
iterates $\{(\bar{u}^{(n)}, \bar{z}^{(n)})\}$ remain in a bounded set, the
implemented dynamics map $F_{\mathrm{LT}}$ of~\cref{eq:lt_dynamics} has locally
Lipschitz Jacobian on a neighborhood of that set, and each convex subproblem
includes both the feasibility constraint $\bar{u}_k + \delta u_k \in
\mathcal{U}$ and the trust-region constraint $\|\delta u_k\|_\infty \leq
\varepsilon$. Then:

\vspace{2pt}
\noindent\textbf{(i) Monotone descent.}
Let $\Delta^{(n)} > 0$ denote the predicted QP cost decrease at iteration $n$.
Because $F_{\mathrm{LT}}$ has locally Lipschitz Jacobian, the first-order model
error on a trust region of radius $\varepsilon$ is $O(\varepsilon^2)$
uniformly on bounded sets. Hence the discrepancy between the linearized and
true objectives over a finite horizon satisfies
\begin{equation}
  |J_{\mathrm{lin}} - J_{\mathrm{true}}| = O(\varepsilon^2).
  \label{eq:cost_discrepancy}
\end{equation}
When $\varepsilon$ is chosen so that this approximation error is less than
$\Delta^{(n)}$, the accepted step is truly descent, so the sequence
$\{J(\bar{u}^{(n)})\}$ is non-increasing. The trust-region shrinkage
$\varepsilon \leftarrow \varepsilon/2$ on failed steps guarantees this
condition is eventually met because the right-hand side of
\cref{eq:cost_discrepancy} vanishes as $\varepsilon \to 0$.

\vspace{2pt}
\noindent\textbf{(ii) KKT convergence.}
If the standard regularity assumptions for trust-region SCP hold at an
accumulation point---in particular a constraint qualification for the active
box constraints---then any accumulation point $(\bar{u}^\star, \bar{z}^\star)$
of the accepted iterates is a KKT point of~\cref{eq:mpc}. Equivalently, the
predicted QP decrease vanishes only at first-order stationary points of the
original nonlinear MPC problem, and the associated multipliers satisfy the KKT
conditions by the standard trust-region SCP argument of
\citet[Theorem~2]{mao2018successive}.
\end{proposition}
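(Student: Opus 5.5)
The plan is to reduce everything to the single-shooting formulation. Since $z_0$ is fixed and $z_{k+1}=F_{\mathrm{LT}}(z_k,u_k)$, the state trajectory is a function $z_{0:H}(u)$ of the stacked controls $u=u_{0:H-1}$. This gives a reduced objective $J(u)=\sum_k \ell(z_k(u),u_k)$ over the box $\mathcal{U}^H$.

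\textbf{Step 1 (regularity of the composed map).} By induction on $k$, each $z_k(u)$ is $C^1$ with locally Lipschitz Jacobian on a neighborhood of the bounded set containing the iterates. The induction uses that compositions and products of bounded, locally Lipschitz maps stay locally Lipschitz. The chain rule gives the sensitivity $\partial z_k/\partial u$ as the forward recursion $\delta z_{k+1}=\Atil_k\delta z_k+\Btil_k\delta u_k$ with $\delta z_0=0$. Hence the linearized trajectory in \cref{eq:qp} is exactly $z(\bar u)+Dz(\bar u)\delta u$, and Taylor's theorem with Lipschitz remainder gives $\|z_k(\bar u+\delta u)-z_k(\bar u)-Dz_k(\bar u)\delta u\|\le c\,\|\delta u\|^2\le c' \varepsilon^2$. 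Here $\|\delta u\|_\infty\le\varepsilon$ and $H$ is finite, so the constants are uniform on the bounded set.

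\textbf{Step 2 (cost discrepancy and descent).} Since $\ell$ is quadratic, it is locally Lipschitz on bounded sets, so the trajectory error transfers to the cost and yields \cref{eq:cost_discrepancy}. For (i), the acceptance test $J(\bar u+\delta u)\le J(\bar u)$ in \Cref{alg:scp} already makes the accepted sequence non-increasing by construction. The substantive claim is that a rejected step is eventually followed by an accepted one. Write $J(\bar u+\delta u)\le J(\bar u)-\Delta^{(n)}+C\varepsilon^2$, using $J_{\mathrm{lin}}(0)=J(\bar u)$. Then I need $\Delta^{(n)}\gtrsim \kappa\,\varepsilon$ at non-stationary points. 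To get this, I compare with the feasible step $\delta u=-t\,\varepsilon\,\Pi(\nabla J)$, a projected-gradient direction scaled into the trust region. Its predicted decrease is at least $\varepsilon\,\chi(\bar u)-O(\varepsilon^2)$, where $\chi$ is a criticality measure such as the norm of the projected gradient step. Halving $\varepsilon$ therefore eventually makes $C\varepsilon^2<\Delta^{(n)}$, so the step is accepted. This Cauchy-decrease comparison is the main obstacle. The statement as worded only asserts $\Delta^{(n)}>0$, and I will need the linear-in-$\varepsilon$ lower bound to make the halving argument airtight.

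\textbf{Step 3 (KKT at accumulation points).} Take an accepted subsequence converging to $\bar u^\star$. $J$ is bounded below, being a sum of convex quadratics in the trajectory that is continuous on a bounded set, and it is non-increasing along the iterates, so the decreases are summable and $\Delta^{(n)}\to0$. I argue by contradiction. If $\chi(\bar u^\star)>0$, continuity of $\chi$ (from Step 1) gives a uniform lower bound on $\chi$ near $\bar u^\star$. The radius cannot shrink below a fixed level there, because Step 2 guarantees acceptance once $\varepsilon\lesssim\chi/C$. This yields a uniform positive decrease infinitely often, which contradicts summability. Hence $\chi(\bar u^\star)=0$, i.e.\ $\bar u^\star$ is a stationary point of $J$ over the box.

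To finish, box constraints are linear, so a constraint qualification holds automatically. Stationarity of the reduced problem then lifts to KKT multipliers for \cref{eq:mpc}, with the equality-constraint multipliers given by the adjoint recursion through $\Atil_k^\top$. This is the standard argument of \citet[Theorem~2]{mao2018successive}, which I would cite for the bookkeeping once the hypotheses above are verified.
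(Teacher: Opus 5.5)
Your Steps 1 and 2 follow the paper's route for part (i): a Lipschitz-Jacobian Taylor remainder, with constants made uniform by boundedness and the finite horizon. They also improve on it. Monotonicity is indeed automatic from the acceptance test. The paper's instruction to pick $\varepsilon$ so that the $O(\varepsilon^2)$ error is below $\Delta^{(n)}$ overlooks that $\Delta^{(n)}$ itself shrinks with $\varepsilon$. Your Cauchy-decrease bound $\Delta^{(n)} \ge \varepsilon\,\chi(\bar u) - O(\varepsilon^2)$ is exactly what closes the halving argument.

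The gap is in Step 3, where you replace the paper's citation of \citet[Theorem~2]{mao2018successive} with a self-contained argument. A monotone, bounded-below sequence only makes the \emph{actual} decreases $J(\bar u^{(n)}) - J(\bar u^{(n+1)})$ summable. To conclude $\Delta^{(n)} \to 0$, or a uniform positive decrease once the radius is bounded below, accepted steps must realize a fixed fraction of the predicted decrease. That requires a ratio test $J(\bar u) - J(\bar u + \delta u) \ge \eta\,\Delta^{(n)}$. \Cref{alg:scp} accepts on simple decrease $J(\bar u + \delta u) \le J(\bar u)$. Your Step 2 bound ties actual to predicted decrease only when $\varepsilon$ is \emph{below} roughly $\chi/C$, and since the radius is halved only on rejection, nothing forces it into that window.

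The step genuinely fails. Take $\dz = 2$, $m = H = 1$, $B = 0$, $E_D = I$, $z_0 = (1,0)$, $G_1 = s\,[\,0,\,-1;\;1,\,0\,]$, and a decoder reading the second coordinate, so $J(u) = \sin^2(su)$. The QP step is the Gauss--Newton (here Newton) step, which sends $u_0$ with $\tan(su_0) = 2su_0$ to $-u_0$ and back at equal cost. Choose $s$ large enough that $2u_0 \le \varepsilon_0$. Then every step is accepted, $\varepsilon$ is never halved, $\Delta^{(n)} = \sin^2(su_0)$ stays constant, and the accumulation points $\pm u_0$ are non-stationary. The paper avoids this only by deferring to Mao et al., whose method does use a ratio test. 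You must either add that test to the algorithm or list it explicitly among the assumed ``standard'' conditions.

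There is also a secondary issue. The radius in \Cref{alg:scp} is global and never enlarged, so rejections near other accumulation points can drive $\varepsilon_n \to 0$. Your claim that ``the radius cannot shrink below a fixed level there'' is therefore unavailable. That case needs the usual excursion argument: steps have length at most $\sqrt{mH}\,\varepsilon_n$, so each passage of the iterates through an annulus around $\bar u^\star$ (where $\chi \ge \chi(\bar u^\star)/2$) costs a fixed amount of $J$.
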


\begin{proof}
By construction, the matrices $\Atil_k, \Btil_k$ in
\crefrange{eq:A_tilde}{eq:B_tilde} are the exact Jacobians of the
implemented Lie--Trotter map $F_{\mathrm{LT}}$, computed without finite
differences.

For part (i), local Lipschitz continuity of the Jacobian of $F_{\mathrm{LT}}$
yields a quadratic first-order remainder on each trust region. Over a finite
horizon, boundedness of the iterates implies a uniform constant in the bound,
so $|J_{\mathrm{lin}} - J_{\mathrm{true}}| = O(\varepsilon^2)$ on the trust
region. Hence sufficiently small trust regions guarantee that the true cost
decrease matches the predicted decrease up to higher-order error, which yields
monotone descent after finitely many shrinkage steps.

For part (ii), feasibility is preserved because each QP enforces
$\bar{u}_k + \delta u_k \in \mathcal{U}$. Once the predicted decrease tends to
zero, the accepted iterates satisfy the first-order necessary conditions of the
nonlinear MPC problem under the standard trust-region SCP regularity
assumptions. The cited result of~\citet[Theorem~2]{mao2018successive} then
implies that every accumulation point of the accepted iterates is a KKT point
of~\cref{eq:mpc}.
\end{proof}

\begin{remark}
\label{rem:boundedness}
The boundedness assumption on iterates is retained as an explicit hypothesis.
In practice it is promoted by compact control constraints together with the
spectral regularization and runtime stability checks of \Cref{sec:stability},
but those mechanisms do not by themselves constitute a formal proof that
$\rho(\Aeff(u)) < 1$ uniformly for all $u \in \mathcal{U}$.
\end{remark}

\begin{corollary}
The conditional-independence baseline~\cref{eq:cond_indep} solves a single
QP per MPC step but cannot represent control--state coupling. The proposed
method solves $N_{\mathrm{SCP}} \in \{1, 5\}$ QPs on the implemented bilinear
model. Both are polynomial-time per iteration; the additional QPs incur
negligible wall-clock cost relative to the modeling gain
(\Cref{tab:compute}).
\end{corollary}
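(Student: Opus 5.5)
The corollary has three parts: the baseline solves one QP per step, the bilinear method solves $N_{\mathrm{SCP}}$ QPs per step, and both are polynomial-time per iteration. The wall-clock comparison is an empirical statement about \Cref{tab:compute}. I will prove the structural claims and treat the wall-clock part as deferred to the experiments.

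\emph{Baseline.} Under \cref{eq:cond_indep}, the matrices $\bar{A}_k,\bar{B}_k$ are fixed once $\Xhist$ is observed. The rollout \cref{eq:mamko} is therefore affine in $u_{0:H-1}$. Substituting it into a convex quadratic stage and terminal cost with box constraints $\mathcal{U}$ gives a single convex QP in $u_{0:H-1}$, so no iteration is needed. That this class cannot represent control--state coupling is exactly the strictness half of \Cref{prop:generalization}: \cref{eq:cond_indep} forces $\partial^2 z_{k+1}/\partial z_k\partial u_k^{(i)}=0$.

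\emph{Proposed method.} Algorithm~\ref{alg:scp} performs $N_{\mathrm{SCP}}$ iterations. Each iteration does three things:
\begin{enumerate}
\item a rollout of \cref{eq:lt_dynamics}, which costs $H$ matrix exponentials and products, i.e.\ $O(H\dz^3)$;
\item the Jacobians $\Atil_k,\Btil_k$ from \crefrange{eq:A_tilde}{eq:B_tilde}, which reverse-mode differentiation delivers at a constant-factor overhead, polynomial in $\dz$ and $m$;
\item one convex QP \cref{eq:qp}.
\end{enumerate}
Condensing that QP eliminates $\delta z$ through the recursion $\delta z_{k+1}=\Atil_k\delta z_k+\Btil_k\delta u_k$. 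This produces a dense QP in $Hm$ variables whose Hessian is positive semidefinite, because it is a convex quadratic composed with an affine map. The constraints are boxes (feasibility together with the $\ell_\infty$ trust region), so the QP is solvable in polynomial time, e.g.\ by interior-point methods. The baseline QP has the same size and structure. Hence both methods are polynomial per iteration, and the proposed method's total cost is at most $N_{\mathrm{SCP}}\le 5$ times a polynomial. Convexity of each subproblem is the same fact used in \Cref{prop:scp_convergence}, namely that the objective is convex quadratic in the subproblem variables.

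\emph{Main obstacle.} The only step that needs care is showing that a $\Delta u$ penalty does not break convexity. Treating the previously applied input as fixed data keeps that term a convex quadratic in $\delta u$, as the statement of \Cref{prop:scp_convergence} already stipulates. The word ``negligible'' cannot be proved analytically. I will reduce it to the measured per-iteration timings in \Cref{tab:compute} and note that the analytic overhead factor is bounded by $N_{\mathrm{SCP}}$.
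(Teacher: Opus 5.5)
Your structural argument is correct, and it says more than the paper does. The paper gives no separate proof of this corollary. It rests the first clause on the one-shot-convexity remark in \Cref{sec:control} and on \Cref{prop:generalization}, the QP count on \Cref{alg:scp}, and the cost clause on \Cref{tab:compute}. Your condensing argument establishes polynomiality for both methods: an affine rollout for the baseline, and for SCP $H$ matrix exponentials, exact Jacobians, and one box-constrained convex QP in $Hm$ variables per iteration. One small correction: reverse mode gives a single vector--Jacobian product at constant-factor cost. The full $\dz \times m$ Jacobian from \texttt{jacrev} therefore costs $\dz$ such products, which is still polynomial.

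The gap is in the last step. Your remark that ``the analytic overhead factor is bounded by $N_{\mathrm{SCP}}$'' holds for the number of QP solves, not for wall-clock time. Each SCP iteration does work the baseline never does: a rollout through $H$ matrix exponentials, a Jacobian through the matrix exponential at every stage, and a further rollout to evaluate $J(\bar{u}+\delta u)$ for the acceptance test. Nothing in your argument bounds the time per SCP iteration by the baseline's time per step, so nothing caps the ratio at $N_{\mathrm{SCP}}$. \Cref{tab:compute} shows the cap fails. At $d=0$, Bilinear-SCP-$5$ takes $0.844$ vs.\ $0.077$\,s on CartPole and $0.876$ vs.\ $0.099$\,s on RSCP, ratios of roughly $11$ and $9$, both above $5$.

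More fundamentally, you cannot reduce ``negligible'' to these timings, because they contradict it. The paper's own discussion in \Cref{sec:compute} calls the bilinear controller ``substantially slower'' and calls the CartPole timings an offline stress-test. There, $\approx 0.8$\,s per step is about $40$ times the $0.02$\,s sampling period. Comparing seconds against a ``modeling gain'' in tracking cost is also not a mathematical statement. What your argument actually supports is weaker: both methods are polynomial per MPC step, and the bilinear controller pays a constant-factor overhead. That overhead is measured at roughly an order of magnitude and is acceptable relative to RSCP's $18$\,s sampling period. You should say that the ``negligible'' clause as written is not established by \Cref{tab:compute}, and is in tension with it, rather than treating it as verified.
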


\section{Experiments}
\label{sec:experiments}

\subsection{Setup}
\label{sec:setup}

\paragraph{Systems.}
We evaluate on two benchmarks across time-invariant and time-varying regimes,
yielding four cells.

\textbf{CartPole.} Standard underactuated cart-pole swing-up; $4$ states
$(x, \dot{x}, \theta, \dot{\theta})$, $1$ control (horizontal force). The
horizontal force enters the equations of motion through products with
trigonometric functions of $\theta$, making the system control-affine with
state-dependent input gain $g(x)$. The Koopman generator of such a system
acts bilinearly on observables in the control
input~\citep{goswami2020global}, so any sufficiently
expressive lifting that captures $\cos\theta$, $\sin\theta$, and
$\dot{\theta}^2$ inherits a $u \cdot z$ coupling that MamKO's
conditional-independence constraint excludes. Whether the learned encoder
spans these coordinates well enough for the gap to be empirically
detectable is one of the questions our forecasting experiments answer
(\Cref{tab:forecast}). The time-invariant variant (\textbf{TI}) uses
constant cart friction; the time-varying variant (\textbf{TV}) modulates
the friction coefficient as
$\mu_c(t) = \mu_c^{\mathrm{base}} + A_\mu \sin(\omega_\mu t)$.

\textbf{RSCP.} The reactor--separator process of~\citet{liu2008distributed},
adopted as the MamKO benchmark in~\citet{li2025mamko}. Two CSTRs in series
followed by a flash separator with recycle; $9$ states and $3$ controls (heat
duties $Q_1, Q_2, Q_3$). The heat duties enter the energy balances
\emph{additively} as $+Q_i / (\rho c_p V_i)$: the governing ODE contains no
$u \cdot x$ terms, only nonlinearity in state. The TV variant modulates feed
composition sinusoidally, introducing time-varying parameters without adding
control--state coupling. RSCP exercises the second axis of \Cref{sec:intro}
(time-varying parameters) without the first (control coupling), making the
two benchmarks complementary tests of the bilinear extension.

\paragraph{Baselines.}
Our primary structural ablation is the conditional-independence baseline,
which is equivalent to our model with $G_i = 0$. We instantiate this
baseline using the MamKO architecture~\citep{li2025mamko}, which is
representative of the class. Throughout, ``Linear'' refers to this
$G_i = 0$ baseline. MamKO itself was evaluated against the Deep Koopman
Operator (DKO) in the original work and established as the stronger
time-varying Koopman baseline; we inherit this comparison by transitivity.
Our method is denoted \textbf{Bilinear-$r$}; we use $r = d_z$ (full rank)
in all reported runs, giving $r = 8$ on CartPole and $r = 15$ on RSCP
(\Cref{tab:arch}).
Closed-loop variants are \textbf{Bilinear-SCP-$N$} for $N \in \{1, 5\}$
SCP iterations.

\paragraph{Protocol.}
We generate $5{,}000$ training, $1{,}000$ validation, and $1{,}000$ test
trajectories per system using mixed sinusoidal and step-function control
excitation. All Koopman-based methods use the same latent dimension $\dz$,
prediction horizon, and training budget; dataset splits and random seeds are
held constant. Forecasting is reported as 30-step open-loop MSE on held-out
test trajectories. Closed-loop MPC is evaluated over $10$ episodes per cell
under a setpoint-tracking task; we report cumulative running-average cost as
a function of time, with shaded $0.3\sigma$
bands\footnote{Bands are reported at $0.3\sigma$ for visual readability of
model separation; the conclusions are unchanged at $1\sigma$, though some
panels then exhibit fully overlapping bands. We report the band width
explicitly so readers can interpret the figures under the variance scale of
their preference.} for visual comparison.

\paragraph{Reproducibility against published values.}
Our re-trained Linear baseline reproduces the forecast MSE
of~\citet[Table~1]{li2025mamko} within $\sim 3\%$ on CartPole TI and within
$0.78\times$ on RSCP TI. On CartPole TV our Linear baseline trains to MSE
$7.8 \times 10^{-3}$ versus the published $4.3 \times 10^{-4}$, a gap we
have not closed under matched hyperparameters; on RSCP TV our Linear
baseline trains to $7.1 \times 10^{-4}$ versus the published
$9.7 \times 10^{-3}$, a $13\times$ improvement that we attribute to the
training-stability mechanism documented in \Cref{sec:training_stability}.
Bilinear-vs-Linear comparisons throughout this paper are reported against
our own re-trained baseline; reviewers comparing absolute numbers should
reference this calibration.

\subsection{Time-Varying Capture: SCP Iteration and Lead-Time Robustness}
\label{sec:tv_capture}

The headline cell of this paper is \textbf{RSCP TV}: a chemical reactor
benchmark with sinusoidally modulated feed composition, no $u \cdot x$
coupling in its governing ODE, and time-varying parameters that the linear
lift can only track through history-driven re-generation of the operator at
each MPC step. Two experiments converge on a single finding: under TV
physics, the bilinear coupling provides operator-level correction that the
linear lift architecturally cannot reach within a single MPC horizon, but
unlocking it requires either iterative re-linearization at solve time or
commitment to a control sequence whose effect on the operator is
structurally encoded.

\paragraph{SCP iteration unlocks TV capture under standard MPC.}
\Cref{fig:mpc_eval} reports cumulative running-average closed-loop cost for
Linear (single QP), Bilinear-SCP-$1$ (one SCP iteration on the bilinear
model---equivalent to LQR-style single linearization), and Bilinear-SCP-$5$
(five iterations) across the four cells. On RSCP TV (bottom right),
Bilinear-SCP-$5$ separates from both other controllers at $t \approx 0.4$\,h
and the gap widens monotonically through end of horizon. Bilinear-SCP-$1$,
by contrast, does not distinguish itself from Linear and trails it slightly.
A single linearization of the bilinear model is insufficient to exploit the
additional capacity under TV physics: the SCP iterations are what unlock
the operator-level corrections the bilinear coupling makes available, by
repeatedly re-linearizing
$\Aeff(u_k) = \mathrm{diag}(A) + \sum_i u_k^{(i)} G_i$ at the current
operating point as the iterates converge.

\Cref{tab:mpc_cost} reports per-cell closed-loop tracking cost for Linear
and our Bilinear-SCP-5 controller over the matched evaluation window
of~\citet[Fig.~3]{li2025mamko}. Bilinear-SCP-5 wins on three of four cells,
with the largest gap on RSCP TV ($3.44$ vs $4.91 \times 10^{-3}$, a $30\%$
reduction). The single loss is a $5\%$ gap on CartPole TI; this regime
carries no $u \cdot x$ coupling, no time-varying parameters, and benign
dynamics, so the additional SCP overhead does not compound into a benefit.

\begin{table*}[t]
  \centering
  \caption{Closed-loop MPC cumulative cost over 10 episodes per cell, on
    the matched evaluation window of~\citet[Fig.~3]{li2025mamko}.
    \textbf{Bold}: best per cell.}
  \label{tab:mpc_cost}
  \begin{tabular}{lrr}
    \toprule
    \textbf{Cell} & \textbf{Linear} & \textbf{Bilinear-SCP-5} \\
    \midrule
    CartPole TI (20\,s)   & $\mathbf{2.40{\times}10^{-2}}$ & $2.54{\times}10^{-2}$ \\
    CartPole TV (20\,s)   & $1.79{\times}10^{-2}$          & $\mathbf{1.73{\times}10^{-2}}$ \\
    RSCP TI (2\,h)        & $7.90{\times}10^{-4}$          & $\mathbf{6.43{\times}10^{-4}}$ \\
    RSCP TV (2\,h)        & $4.91{\times}10^{-3}$          & $\mathbf{3.44{\times}10^{-3}}$ \\
    \bottomrule
  \end{tabular}
\end{table*}

We emphasize that Bilinear-SCP-$1$ is not a stripped-down ablation but a
realistic deployment baseline: it is what one obtains from running the
bilinear model under a budget-constrained MPC stack (e.g., compute-limited
edge controllers). The separation between SCP-$1$ and SCP-$5$ on RSCP TV
is the empirical cost of single linearization in the TV regime.

\begin{figure}[ht]
  \centering
  \includegraphics[width=\columnwidth]{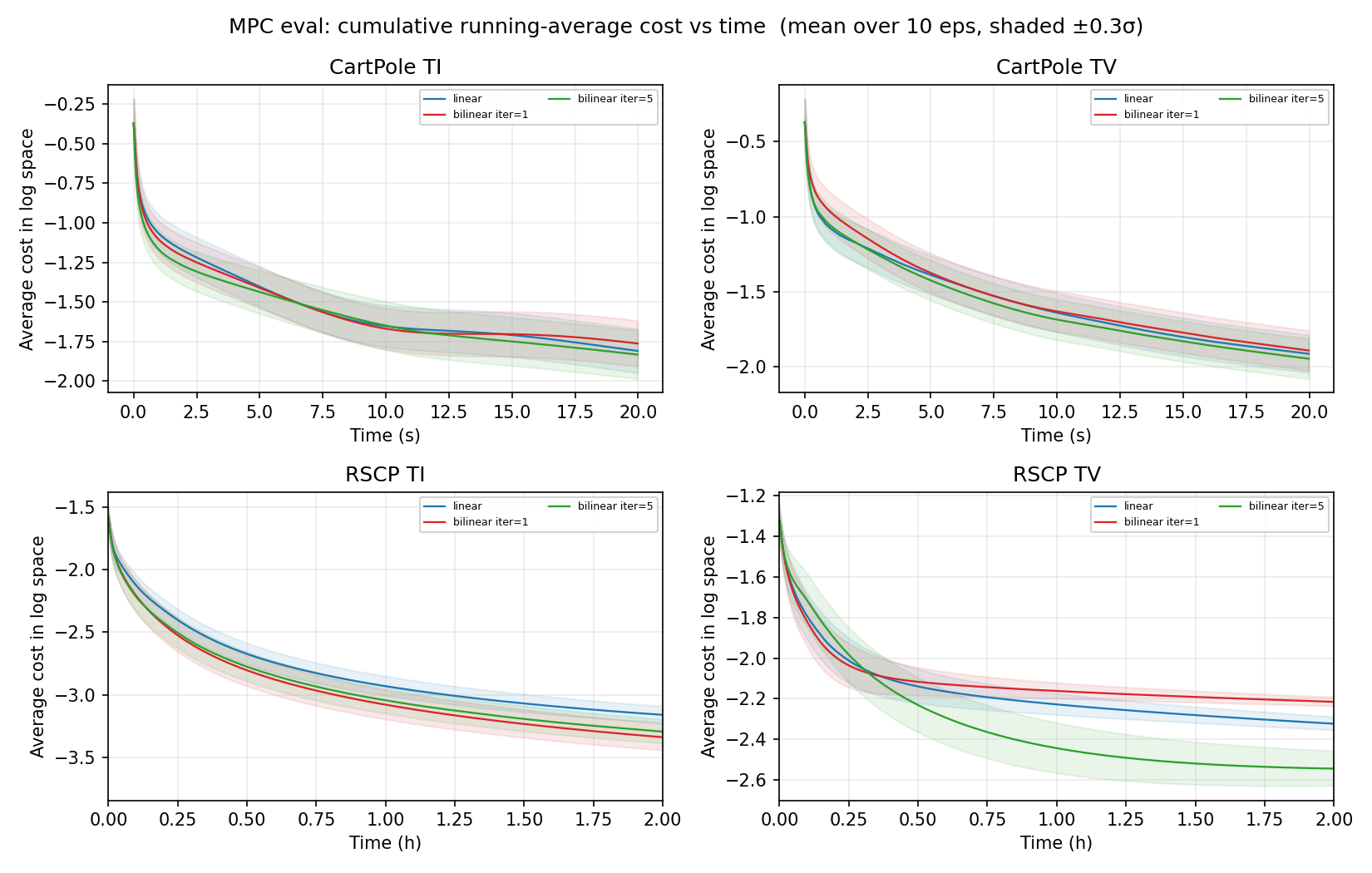}
  \caption{Closed-loop MPC cost over $10$ episodes per cell.
    Bilinear-SCP-$5$ separates from both other controllers on RSCP TV
    (bottom right) starting around $t \approx 0.4$\,h, with the gap widening
    monotonically through end of horizon. Other cells discussed in
    \Cref{sec:other_cells}.}
  \label{fig:mpc_eval}
\end{figure}

\paragraph{Lead-time commitment unlocks the same correction with a frozen backbone.}
In deployment, MPC controllers are not always free to re-plan at every
control step: compute budgets, network round-trip delays, and asynchronous
sensor updates force the controller to commit to a previously computed
plan for some lead time before re-planning. We probe this regime via a
\emph{lead-time experiment}: at each MPC step the controller commits to
its plan for $d$ subsequent steps before re-planning, with
$d \in \{0, 1, 3, 5\}$. The $d = 0$ case recovers standard receding-horizon
MPC. Crucially, we hold \emph{both} the original plan and the backbone
dynamics fixed during the lead window (\texttt{regen=never}),
so neither model receives intermediate corrections from new sensor data.
This isolates the open-loop correction capacity of the model class itself:
the linear lift's drift $\bar{A}_k$ is fixed across the entire lead window
because it depends only on history; the bilinear model's effective drift
$\Aeff(u_k)$ varies with the \emph{committed} control $u_k$ along the
lead window, providing operator-level correction even with the backbone
frozen. We exclude $d = 10$ from analysis: at over three minutes of stale
plan on RSCP, both models saturate from discretization-induced error
rather than model-class differences.

\Cref{fig:lead_time_rscp} shows the result on RSCP TV. The bilinear model
sits below Linear at every commitment window
(\Cref{tab:lead_time_full}), with the gap widening sharply once planning
becomes stale. At $d = 0$ the bilinear advantage is modest ($-2.54$ vs
$-2.32$, a gap of $0.22$ in log-cost). At $d = 1$, Linear degrades
sharply---its cumulative log-cost plateaus near $-1.87$ by end of horizon,
while the bilinear model reaches $-2.78$, a gap of nearly $1$ in log-cost.
The pattern persists at $d = 3$ (Linear plateau $-2.06$, bilinear $-2.57$)
and shrinks at $d = 5$ where both models saturate but bilinear still leads
by $\sim 0.22$ in log-cost.
\Cref{fig:lead_time_cartpole} reports the same experiment on CartPole TV.
The bilinear advantage is small throughout and roughly constant in $d$
(\Cref{tab:lead_time_cartpole_full}); the underlying dynamics are benign
enough that plan staleness does not catastrophically degrade either model.

\begin{figure}[ht]
  \centering
  \includegraphics[width=\columnwidth]{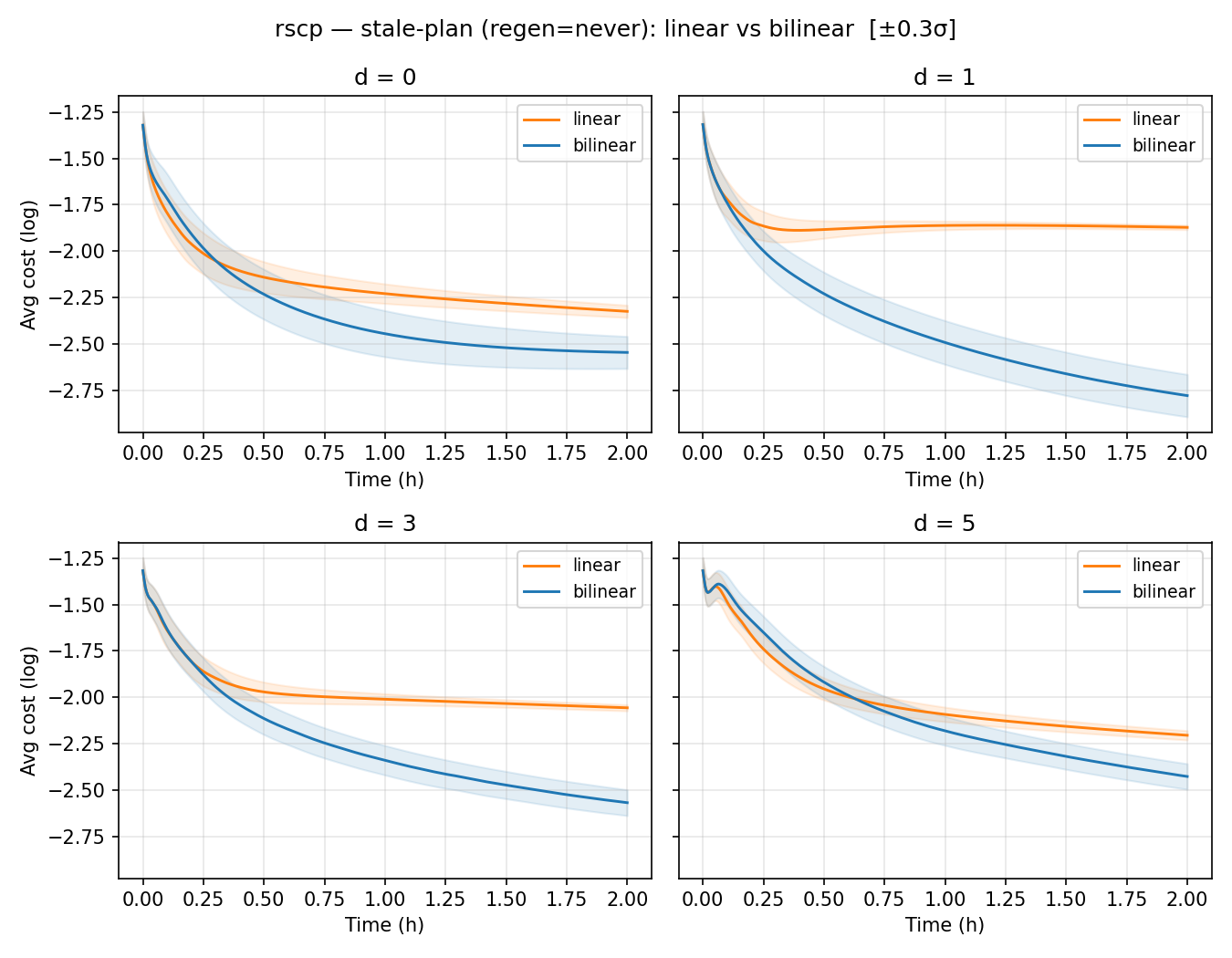}
\caption{Lead-time robustness on RSCP TV under stale-plan execution
    (\texttt{regen=never}): both the plan and the backbone are frozen
    during the lead window. The bilinear model maintains an advantage at
    every $d$, with the gap widening sharply at $d \in \{1, 3\}$ where
    Linear degrades while bilinear continues to converge; at $d = 5$ both
    models saturate but bilinear still leads. Bands are $\pm 0.3 \sigma$
    over $10$ episodes.}
  \label{fig:lead_time_rscp}
\end{figure}

\begin{figure}[ht]
  \centering
  \includegraphics[width=\columnwidth]{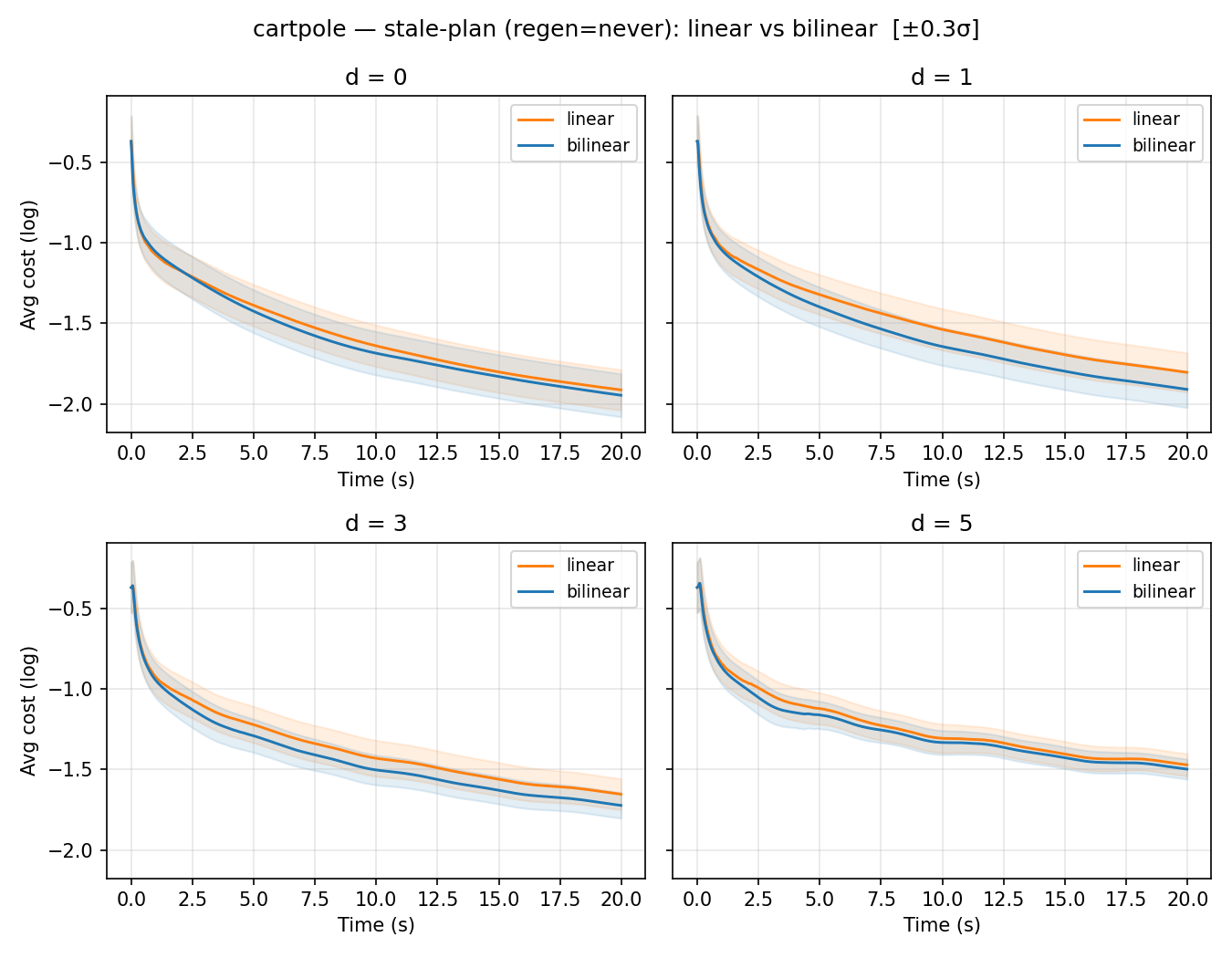}
\caption{Lead-time robustness on CartPole TV under stale-plan execution.
    Bilinear sits modestly below Linear at every $d$, with the gap roughly
    constant across the sweep. The system is benign enough that staleness
    does not catastrophically degrade either model. Bands are
    $\pm 0.3 \sigma$ over $10$ episodes.}
  \label{fig:lead_time_cartpole}
\end{figure}

\paragraph{One mechanism, two regimes.}
Both findings identify the same mechanism. The bilinear coupling encodes
how the drift varies with the control, and this variation is exploitable
through any control regime that exercises the dependency: iterated SCP
exercises it via re-linearization at the current operating point;
lead-time commitment exercises it via rollout of the committed control
sequence with the backbone frozen. The Linear baseline, lacking the
coupling, has no analogous mechanism: its only access to time-varying
physics is backbone re-generation, which fails under freeze and is
insufficient under standard MPC. This regime is operationally relevant:
control deployments commonly enforce re-planning at intervals longer
than the underlying simulator step due to compute, communication, or
scheduling constraints.

\subsection{Forecasting Non-Inferiority}
\label{sec:forecast}

\begin{table*}[t]
  \centering
  \caption{Forecast MSE on held-out trajectories. Best-checkpoint
    (single minimum-validation-loss epoch, the convention
    of~\citet[Table~1]{li2025mamko}) and mean over the final $50$ training
    epochs (deployment-relevant, robust to selection from noisy training
    curves). \textbf{Bold}: best per cell per metric.}
  \label{tab:forecast}
  \begin{tabular}{l l rrrr}
    \toprule
    \textbf{Model} & \textbf{Metric} & \textbf{CP TI} & \textbf{CP TV}
                                     & \textbf{RSCP TI} & \textbf{RSCP TV} \\
    \midrule
    \multirow{2}{*}{Linear}
      & best        & $6.31{\times}10^{-4}$ & $7.80{\times}10^{-3}$
                    & $\mathbf{2.28{\times}10^{-3}}$
                    & $\mathbf{7.14{\times}10^{-4}}$ \\
      & mean$_{50}$ & $6.67{\times}10^{-4}$ & $8.01{\times}10^{-3}$
                    & $2.36{\times}10^{-3}$
                    & $9.09{\times}10^{-4}$ \\
    \midrule
    \multirow{2}{*}{Bilinear}
      & best        & $\mathbf{4.45{\times}10^{-4}}$ & $\mathbf{7.43{\times}10^{-3}}$
                    & $2.28{\times}10^{-3}$ & $8.80{\times}10^{-4}$ \\
      & mean$_{50}$ & $\mathbf{4.77{\times}10^{-4}}$ & $\mathbf{7.77{\times}10^{-3}}$
                    & $\mathbf{2.36{\times}10^{-3}}$ & $\mathbf{9.05{\times}10^{-4}}$ \\
    \bottomrule
  \end{tabular}
\end{table*}

\Cref{tab:forecast} reports forecast MSE under best-checkpoint (the single
epoch with lowest validation loss, matching the convention
of~\citet[Table~1]{li2025mamko}) and mean over the final $50$ training
epochs (the value a deployed system would see, robust to selection from
noisy training curves). The two metrics agree on three of four cells:
bilinear wins CartPole TI by $\sim 29\%$, edges CartPole TV by $3$--$5\%$,
and ties RSCP TI within noise---all consistent with
\Cref{prop:forecasting}.

The cells diverge on RSCP TV. Best-checkpoint favors Linear by $23\%$;
mean$_{50}$ ties (bilinear marginally lower at $9.05$ vs
$9.09 \times 10^{-4}$). \Cref{fig:val_loss} (bottom right) explains the
gap: the Linear baseline trains into a noisy basin that oscillates with
amplitude $\pm 0.5$ in log-loss for the entire $400$-epoch run, while
bilinear converges smoothly within $\sim 50$ epochs and remains flat.
Best-checkpoint selects one of Linear's downward spikes; the tail mean
averages across them. A model selected by best-checkpoint cannot be
reliably re-deployed without the validation-loss oracle that selected it;
the tail mean is the deployment-relevant comparison.

\subsection{Training Stability}
\label{sec:training_stability}

Validation-loss curves (\Cref{fig:val_loss}) reveal a striking asymmetry
between cells. On CartPole TI/TV and RSCP TI, both models train smoothly
to convergence. On RSCP TV, the Linear model exhibits sustained
oscillations of $\pm 0.5$ in log-loss throughout the entire $400$-epoch
run, while the bilinear model converges smoothly within the first
$\sim 50$ epochs and remains flat thereafter. The variance of the Linear
model's val loss across the final $200$ epochs exceeds the bilinear
model's by an order of magnitude.

We interpret this as an optimization-side benefit of the bilinear
parameterization: under time-varying physics, the linear lift must
continuously adjust its operator generation to track the moving target,
producing the noisy training signal we observe. The bilinear coupling
absorbs a portion of the time-varying capacity directly into a stationary
operator structure, decoupling the optimization landscape from the data's
non-stationarity. The headline best-checkpoint number for RSCP TV
(bilinear $+23\%$ worse than Linear) and the tied mean$_{50}$ comparison
(\Cref{tab:forecast}) together tell a single story: Linear's apparent
advantage is the selection artifact, while bilinear achieves comparable
forecasting performance with dramatically more reliable training---a
property that matters for deployment even when raw MSE does not separate
the models.

The structural origin of this stabilization---the bilinear coupling
absorbing TV capacity into a stationary operator---is the same mechanism
that produces the closed-loop gains documented in \Cref{sec:tv_capture}.
The training-time and inference-time effects of the coupling are two
faces of one phenomenon.

\begin{figure}[ht]
  \centering
  \includegraphics[width=\columnwidth]{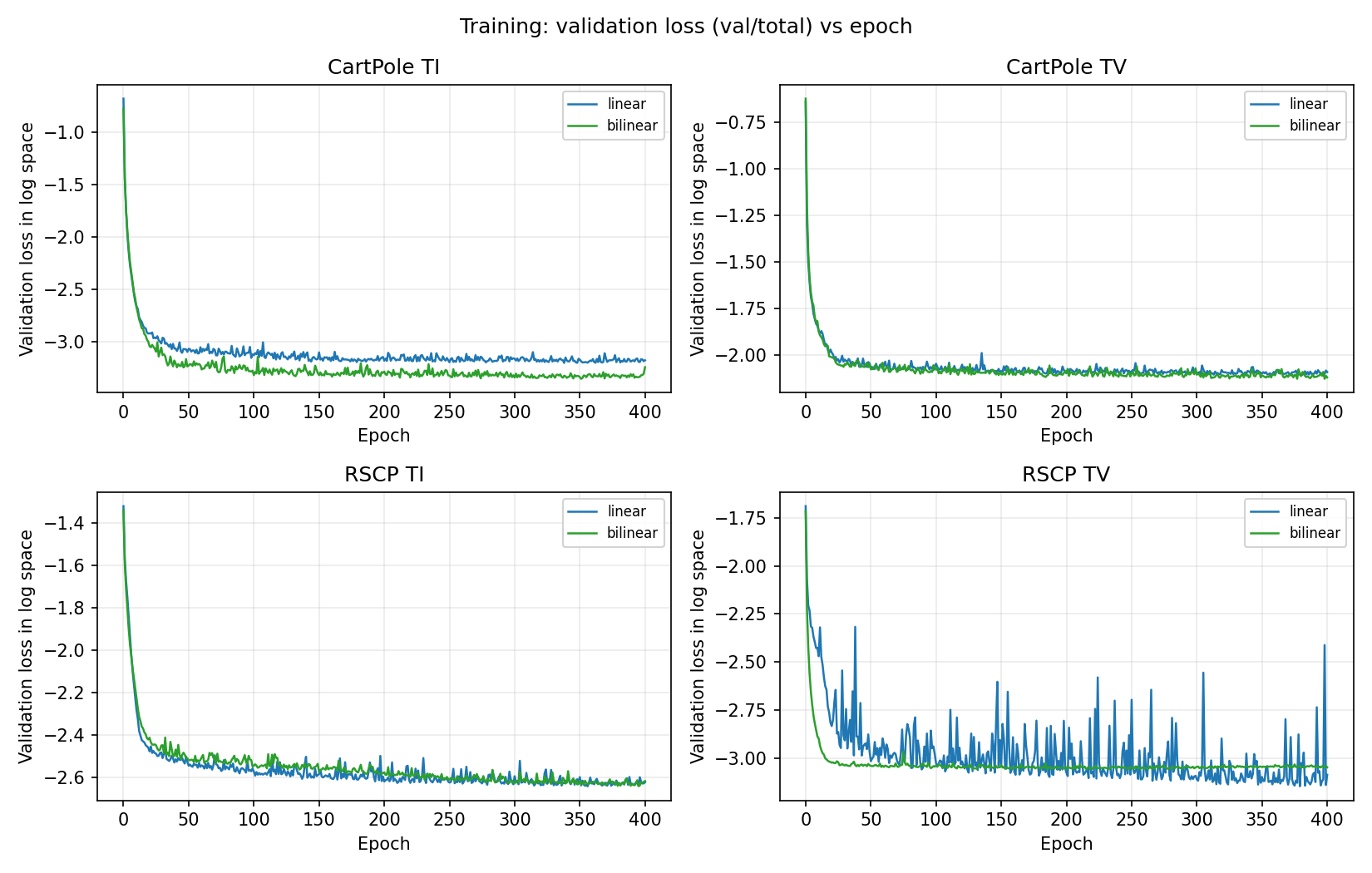}
  \caption{Validation loss over training. RSCP TV (bottom right) exhibits
    sustained oscillation of the Linear model's val loss; the bilinear
    model converges smoothly. The variance gap across the final $200$
    epochs is an order of magnitude.}
  \label{fig:val_loss}
\end{figure}

\subsection{Other Cells: CartPole and RSCP TI}
\label{sec:other_cells}

The remaining cells (CartPole TI/TV, RSCP TI) exercise the bilinear
extension under conditions in which it is either underutilized (CartPole,
where forecasting wins do not translate to closed-loop separation) or
operating outside its primary regime (RSCP TI, no time-varying parameters,
no $u \cdot x$ coupling). They are reported here for completeness; none
is the headline cell.

\textbf{CartPole TI/TV.} All three closed-loop controllers (Linear,
Bilinear-SCP-$1$, Bilinear-SCP-$5$) track within overlapping bands on both
variants (\Cref{fig:mpc_eval}, top row); the bilinear extension does not
hurt closed-loop performance even when its strict forecasting advantage
(\Cref{tab:forecast}) does not translate into MPC gains. Lead-time
results (\Cref{fig:lead_time_cartpole}) show bilinear sitting modestly
below Linear at every $d$, with a roughly constant gap across the sweep. The system is benign enough that staleness does not
catastrophically degrade either model.

\textbf{RSCP TI.} Both bilinear variants sit below Linear from
$t \approx 0.5$\,h through the end of horizon, with $0.3\sigma$ bands
clearly separated by $t \approx 1.5$\,h (\Cref{fig:mpc_eval}, lower left).
Bilinear-SCP-$1$ marginally edges Bilinear-SCP-$5$ here---unlike the
RSCP TV pattern, single linearization is sufficient on time-invariant
dynamics. We attribute the bilinear gain to compounding accuracy in the
multi-step rollout under the convex, $u \cdot x$-free dynamics: even
without TV physics, the additional lifting capacity offered by $G$ is
occasionally useful when the controller exercises the input range broadly
during transients.

\subsection{Coupling Strength Across Cells}
\label{sec:gi_norms_summary}

The learned $\|G\|_F$ values vary substantially across cells in ways that
illuminate the structural mechanisms above: high under TV physics where
$G$ absorbs operating-point drift, low under TI physics where
\Cref{prop:forecasting} predicts $G^\star \approx 0$, and elevated on
RSCP TI where the optimizer absorbs training slack into $G$ without
physical target. Per-cell values and qualitative training-trajectory
shapes are reported in \Cref{app:gi_norms}.

\subsection{Computational Cost}
\label{sec:compute}

For the TV stale-plan experiments, \Cref{tab:compute} reports mean
wall-clock per control step under lead-time execution for the Linear
controller and Bilinear-SCP-$5$ across
commitment windows $d \in \{0,1,3,5\}$. Bilinear-SCP-$5$ is substantially
slower than the single-QP Linear baseline but remains practical relative to
the $18$\,s RSCP sampling period; on CartPole, the timings should be read as
an offline stress-test rather than a real-time deployment target. The
decrease on CartPole TV with larger commitment window appears to be
problem-dependent: the fixed prefix provides a better SCP warm start and
often triggers earlier termination, whereas on RSCP TV the solver typically
still reaches the maximum SCP iteration count.

\begin{table}[!t]
  \centering
  \caption{Mean wall-clock per control step under lead-time execution on the time-varying benchmarks}
  \label{tab:compute}
  \setlength{\tabcolsep}{3pt}
  \footnotesize
  \begin{tabular}{lrrrr}
    \toprule
    \textbf{Method} & $d=0$ & $d=1$ & $d=3$ & $d=5$ \\
    \midrule
    CP --- Linear        & $0.077$\,s & $0.077$\,s & $0.078$\,s & $0.078$\,s \\
    CP --- Bilinear & $0.844$\,s & $0.818$\,s & $0.774$\,s & $0.770$\,s \\
    RSCP --- Linear      & $0.099$\,s & $0.095$\,s & $0.098$\,s & $0.098$\,s \\
    RSCP --- Bilinear & $0.876$\,s & $0.875$\,s & $0.889$\,s & $0.905$\,s \\
    \bottomrule
  \end{tabular}
\end{table}
\section{Related Work}
\label{sec:related}

\paragraph{Koopman operators for control.}
\citet{korda2018linear} first demonstrated that EDMD-based Koopman models enable
efficient convex MPC. Subsequent work has extended this to probabilistic
models~\citep{han2021desko}, graph neural network
liftings~\citep{li2020compositional}, and time-varying-system
modeling~\citep{hao2022deep}. MamKO~\citep{li2025mamko} introduced
time-varying operators via Mamba-inspired matrix generation. Our work is
orthogonal: we introduce a structural extension (bilinear coupling) that any of
these could in principle adopt.

\paragraph{Bilinear dynamical systems.}
The theory of bilinear systems is classical~\citep{mohler1973bilinear} and their
controllability properties well characterized~\citep{Elliott2009bilinear}. The
contribution here is bringing this structure into the learned Koopman-for-control
pipeline, with a low-rank parameterization that regularizes sample complexity
and a tractable SCP controller that exploits the resulting Jacobian structure.

\paragraph{Sequential convex programming.}
SCP methods have a long history in aerospace trajectory
optimization~\citep{mao2018successive,malyuta2022convex}. Our application is
standard; the novelty is that the bilinear Koopman model provides exact
model Jacobians~\crefrange{eq:A_tilde}{eq:B_tilde}, avoiding the
finite-difference approximations that degrade SCP convergence in black-box
settings.

\paragraph{Data-driven nonlinear control.}
SINDy-C~\citep{brunton2016sindy,kaiser2018sparse} can learn bilinear terms but
requires a fixed polynomial library and does not scale to high-dimensional
systems. Neural ODE methods~\citep{chen2018neural} are highly expressive but
relinquish the convex MPC structure. Our method occupies a deliberate middle
ground: more expressive than linear Koopman, more tractable than black-box
dynamics.

\section{Conclusion}
\label{sec:conclusion}

We have shown that the conditional-independence constraint shared across
the Mamba-Koopman family, while necessary to preserve MPC convexity in
single-shot formulations, leaves two regimes structurally
underrepresented: control-state coupling and time-varying parameters. Our
bilinear extension addresses both with a minimal architectural change,
fewer than $1\%$ added parameters, exact model Jacobians, and a provably
convergent SCP controller.

Across CartPole (which exercises $u \cdot x$ coupling) and RSCP (which
exercises time-varying parameters in the TV variant without $u \cdot x$
coupling), four consistent effects emerged: forecasting non-inferiority on
all cells under deployment-relevant averaging, with strict gains where
\Cref{prop:forecasting} predicts them; substantial training stabilization
on RSCP TV; closed-loop MPC gains on RSCP TV when SCP is iterated to
convergence; and graceful degradation under stale-plan execution, where the
bilinear model maintains a clear robustness advantage. The learned
$\|G\|_F$ patterns provide an additional interpretive diagnostic for which
mechanism each cell primarily exercises.

Open directions include extending the bilinear parameterization to the
step-varying $B_k$ matrices, imposing structured priors on $G_i$ that encode
known physical coupling topology, and designing benchmarks that isolate the
$u \cdot x$ structural axis from time-varying parameters more cleanly.

\section*{AI Assistance Disclosure}
We used AI assistants (large language models) for language editing,
phrasing, and LaTeX formatting throughout this manuscript. All technical
content, mathematical results, experimental design, code, and analyses
are the authors' own work. The AI tools did not contribute novel ideas,
proofs, or empirical results.

\bibliographystyle{abbrvnat}
\bibliography{refs}

@article{koopman1931hamiltonian,
  author  = {Koopman, Bernard O.},
  title   = {Hamiltonian Systems and Transformation in {H}ilbert Space},
  journal = {Proceedings of the National Academy of Sciences},
  year    = {1931},
  volume  = {17},
  number  = {5},
  pages   = {315--318},
}

@article{mezic2005spectral,
  author  = {Mezi{\'c}, Igor},
  title   = {Spectral Properties of Dynamical Systems, Model Reduction
             and Decompositions},
  journal = {Nonlinear Dynamics},
  year    = {2005},
  volume  = {41},
  pages   = {309--325},
}

@article{korda2018linear,
  author  = {Korda, Milan and Mezi{\'c}, Igor},
  title   = {Linear Predictors for Nonlinear Dynamical Systems:
             {K}oopman Operator Meets Model Predictive Control},
  journal = {Automatica},
  year    = {2018},
  volume  = {93},
  pages   = {149--160},
}

@article{williams2015data,
  author  = {Williams, Matthew O. and Kevrekidis, Ioannis G. and
             Rowley, Clarence W.},
  title   = {A Data-Driven Approximation of the {K}oopman Operator:
             Extending Dynamic Mode Decomposition},
  journal = {Journal of Nonlinear Science},
  year    = {2015},
  volume  = {25},
  pages   = {1307--1346},
}

@article{lusch2018deep,
  author  = {Lusch, Bethany and Kutz, J. Nathan and Brunton, Steven L.},
  title   = {Deep Learning for Universal Linear Embeddings of Nonlinear
             Dynamics},
  journal = {Nature Communications},
  year    = {2018},
  volume  = {9},
  pages   = {4950},
}

@inproceedings{li2025mamko,
  author    = {Li, Zhaoyang and Han, Minghao and Yin, Xunyuan},
  title     = {{MamKO}: {M}amba-Based {K}oopman Operator for Modeling and
               Predictive Control},
  booktitle = {International Conference on Learning Representations ({ICLR})},
  year      = {2025},
}

@article{gu2023mamba,
  author  = {Gu, Albert and Dao, Tri},
  title   = {Mamba: Linear-Time Sequence Modeling with Selective State Spaces},
  journal = {arXiv preprint arXiv:2312.00752},
  year    = {2023},
}

@book{mohler1973bilinear,
  author    = {Mohler, Ronald R.},
  title     = {Bilinear Control Processes},
  publisher = {Academic Press},
  year      = {1973},
}

@book{Elliott2009bilinear,
  author    = {Elliott, David L.},
  title     = {Bilinear Control Systems: Matrices in Action},
  publisher = {Springer},
  year      = {2009},
}

@article{mao2018successive,
  author  = {Mao, Yuanqi and Szmuk, Michael and Xu, Xiangru and
             A\c{c}ikme\c{s}e, Beh\c{c}et},
  title   = {Successive Convexification: A Superlinearly Convergent Algorithm
             for Non-convex Optimal Control Problems},
  journal = {arXiv preprint arXiv:1804.06539},
  year    = {2018},
}

@article{malyuta2022convex,
  author  = {Malyuta, Danylo and Reynolds, Taylor P. and Szmuk, Michael and
             Lew, Thomas and Bonalli, Riccardo and Pavone, Marco and
             A\c{c}ikme\c{s}e, Beh\c{c}et},
  title   = {Convex Optimization for Trajectory Generation: A Tutorial on
             Generating Dynamically Feasible Trajectories Reliably and
             Efficiently},
  journal = {IEEE Control Systems Magazine},
  year    = {2022},
  volume  = {42},
  number  = {5},
  pages   = {40--113},
}

@article{li2004iterative,
  author  = {Li, Weiwei and Todorov, Emanuel},
  title   = {Iterative Linear Quadratic Regulator Design for Nonlinear
             Biological Movement Systems},
  journal = {ICINCO},
  year    = {2004},
  pages   = {222--229},
}

@article{stellato2020osqp,
  author  = {Stellato, Bartolomeo and Banjac, Goran and Goulart, Paul and
             Bemporad, Alberto and Boyd, Stephen},
  title   = {{OSQP}: An Operator Splitting Solver for Quadratic Programs},
  journal = {Mathematical Programming Computation},
  year    = {2020},
  volume  = {12},
  pages   = {637--672},
}

@inproceedings{han2021desko,
  author    = {Han, Minghao and Euler-Rolle, Jacob and Katzschmann, Robert K.},
  title     = {{DeSKO}: Stability-Assured Robust Control with a Deep Stochastic
               {K}oopman Operator},
  booktitle = {International Conference on Learning Representations ({ICLR})},
  year      = {2021},
}

@inproceedings{li2020compositional,
  author    = {Li, Yunzhu and He, Hao and Wu, Jiajun and Katabi, Dina and
               Torralba, Antonio},
  title     = {Learning Compositional {K}oopman Operators for Model-Based
               Control},
  booktitle = {International Conference on Learning Representations ({ICLR})},
  year      = {2020},
}

@article{hao2022deep,
  author  = {Hao, Wenjian and Huang, Bowen and Pan, Wei and Wu, Di and
             Mou, Shaoshuai},
  title   = {Deep {K}oopman Learning of Nonlinear Time-Varying Systems},
  journal = {Automatica},
  year    = {2024},
  volume  = {159},
  pages   = {111372},
}

@article{brunton2016sindy,
  author  = {Brunton, Steven L. and Proctor, Joshua L. and Kutz, J. Nathan},
  title   = {Discovering Governing Equations from Data by Sparse Identification
             of Nonlinear Dynamical Systems},
  journal = {Proceedings of the National Academy of Sciences},
  year    = {2016},
  volume  = {113},
  number  = {15},
  pages   = {3932--3937},
}

@article{kaiser2018sparse,
  author  = {Kaiser, Eurika and Kutz, J. Nathan and Brunton, Steven L.},
  title   = {Sparse Identification of Nonlinear Dynamics for Model Predictive
             Control in the Low-Data Limit},
  journal = {Proceedings of the Royal Society A},
  year    = {2018},
  volume  = {474},
  pages   = {20180335},
}

@inproceedings{chen2018neural,
  author    = {Chen, Ricky T.Q. and Rubanova, Yulia and Bettencourt, Jesse and
               Duvenaud, David},
  title     = {Neural Ordinary Differential Equations},
  booktitle = {Advances in Neural Information Processing Systems (NeurIPS)},
  year      = {2018},
  volume    = {31},
}

@article{liu2008distributed,
  author  = {Liu, Jinfeng and Mu{\~n}oz de la Pe{\~n}a, David and
             Ohran, Brett J. and Christofides, Panagiotis D. and
             Davis, James F.},
  title   = {A Two-Tier Architecture for Networked Process Control},
  journal = {Chemical Engineering Science},
  year    = {2008},
  volume  = {63},
  number  = {22},
  pages   = {5394--5409},
}

@article{liu2009DistributedMPC,
  author  = {Liu, Jinfeng and Mu{\~n}oz de la Pe{\~n}a, David and
             Christofides, Panagiotis D.},
  title   = {Distributed Model Predictive Control of Nonlinear Process
             Systems},
  journal = {AIChE Journal},
  year    = {2009},
  volume  = {55},
  number  = {5},
  pages   = {1171--1184},
}

@phdthesis{chen2022dissertation,
  author  = {Chen, Siyao},
  title   = {A Machine Learning-Based Approach to Cybersecurity and Safety
             of Model Predictive Control Systems},
  school  = {University of California, Los Angeles},
  year    = {2022},
  type    = {Ph.D. Dissertation},
  note    = {Section 2.4 documents the corrected RSCP formulation used in
             this work.},
}

@inproceedings{kingma2014adam,
  author    = {Kingma, Diederik P. and Ba, Jimmy},
  title     = {{A}dam: A Method for Stochastic Optimization},
  booktitle = {International Conference on Learning Representations ({ICLR})},
  year      = {2015},
}

@article{mclachlan2002splitting,
  title = {Splitting methods},
  author = {McLachlan, Robert I. and Quispel, G. Reinout W.},
  journal = {Acta Numerica},
  volume = {11},
  pages = {341--434},
  year = {2002},
  publisher = {Cambridge University Press},
  doi = {10.1017/S0962492902000053}
}

@article{trotter1959product,
  title = {On the product of semi-groups of operators},
  author = {Trotter, H. F.},
  journal = {Proceedings of the American Mathematical Society},
  volume = {10},
  number = {4},
  pages = {545--551},
  year = {1959},
  doi = {10.1090/S0002-9939-1959-0108732-6}
}

@inbook{goswami2020global,
  author    = {Goswami, Debdipta and Paley, Derek A.},
  editor    = {Mauroy, Alexandre and Mezi{\'c}, Igor and Susuki, Yoshihiko},
  title     = {Global Bilinearization and Reachability Analysis of Control-Affine Nonlinear Systems},
  booktitle = {The Koopman Operator in Systems and Control: Concepts, Methodologies, and Applications},
  year      = {2020},
  publisher = {Springer International Publishing},
  address   = {Cham},
  pages     = {81--98},
  isbn      = {978-3-030-35713-9},
  doi       = {10.1007/978-3-030-35713-9_4}
}

\appendix
 
\section{Stability: Training Penalty and Inference Screening}
\label{app:stability}
 
\paragraph{Training.}
The spectral penalty in the loss is
\[
  \mathcal{L}_{\mathrm{stab}} = \sum_j \mathrm{ReLU}\bigl(|\lambda_j(A_{\mathrm{disc},k})| - 1 + m_s\bigr),
\]
with margin $m_s = 0.05$. Eigenvalues are computed via
\texttt{torch.linalg.}\allowbreak\texttt{eigvals} cast to \texttt{float32}
to avoid mixed-precision artifacts. Zero initialization of $G_i$ ensures the
model starts as an exact copy of the Linear baseline and the penalty is inactive at the
start of training.
 
\paragraph{Inference.}
We do not perform a per-step spectral check at inference. Closed-loop
stability is enforced indirectly through the SCP trust-region constraint
and the QP control bounds, both of which keep the linearized dynamics
within the regime where our trained operators are well-conditioned. We
did, however, evaluate the cheap Gershgorin disk pre-check
($|A_{\mathrm{disc},k,ii}| + \sum_{j \neq i}|A_{\mathrm{disc},k,ij}| < 1$)
as an $O(\dz^2)$ stability certificate at each MPC step. Empirically the
disks straddle the unit circle in $100\%$ of MPC steps across both
bilinear TV cells (CartPole and RSCP, $d \in \{0,1,3,5\}$, $10$ episodes
$\times 1000$ steps each), because the off-diagonal mass introduced by the
bilinear factor $E_P(u_k) = \exp(\sum_i u_k^{(i)} G_i)$ inflates the radii
past unity even when $\rho(A_{\mathrm{disc},k}) < 1$. A full
eigendecomposition would therefore be required for any runtime spectral
gate; for $\dz \leq 64$ this is sub-millisecond per step and does not
bottleneck the MPC loop, but we leave runtime stability gating to future
work.
 
\section{System Dynamics}
\label{app:systems}
 
This appendix specifies the governing ODEs, parameters, and action spaces of
the four system variants (CartPole TI/TV, RSCP TI/TV) used in the paper.
 
\subsection{CartPole (TI, TV)}
\label{app:cartpole}
 
The state is $(x, \dot{x}, \theta, \dot{\theta})$ with $\theta$ measured from
the upright. The control is the horizontal force $F$ applied to the cart.
Following the MamKO benchmark protocol~\citep{li2025mamko}, we use
gravity $g = 10$\,m\,s$^{-2}$, cart mass $m_c = 1$\,kg, pole mass
$m_p = 0.1$\,kg, half-length $\ell = 0.5$\,m, force bound
$|F| \leq 20$\,N, and sampling period $\Delta t = 0.02$\,s. The equations of
motion include cart and pole friction terms $\mu_c, \mu_p$ as in the MamKO
reference implementation.
 
\paragraph{Time-invariant variant (TI).}
The friction coefficients are fixed at $\mu_c = 0$, $\mu_p = 0$ (frictionless
limit). This recovers the standard frictionless CartPole.
 
\paragraph{Time-varying variant (TV).}
The cart friction is modulated as
\[
  \mu_c(t) = \mu_c^{\mathrm{base}} + \sin(\omega\, t),
\]
with $\mu_c^{\mathrm{base}} = 5\times10^{-4}$, $\mu_p = 2\times10^{-6}$, and
$\omega = 1$\,rad\,s$^{-1}$, matching the MamKO TV configuration. This is
the specific cell used in the closed-loop MPC and lead-time experiments;
results for $\omega \in \{0.1, 10\}$ are not reported here.
 
\paragraph{Note on the sin/cos discrepancy.}
The MamKO paper writes $\mu_c(t) = \mu_c^{\mathrm{base}} + \cos(\omega t)$
(Eq.\ 16 of~\citet{li2025mamko}); the official MamKO repository uses
\texttt{sin} in \texttt{cartpole\_V.py}. We follow the repository, which is
the executable ground truth.

\paragraph{Parameters.}
\Cref{tab:cartpole_params} lists the numerical constants used for the
CartPole benchmark. The dynamical constants and time-varying friction
settings follow the MamKO benchmark protocol~\citep{li2025mamko}; the
shared MamKO architecture hyperparameters for CartPole are summarized
separately in \Cref{tab:arch}.

\begin{table*}[ht]
  \centering
  \caption{CartPole parameters for the time-invariant and time-varying
    benchmarks.}
  \label{tab:cartpole_params}
  \footnotesize
  \begin{tabular*}{\textwidth}{@{\extracolsep{\fill}} l l p{0.62\textwidth}}
    \toprule
    Symbol & Value & Description \\
    \midrule
    $g$ & $10$\,m\,s$^{-2}$ & gravitational acceleration \\
    $m_c$ & $1$\,kg & cart mass \\
    $m_p$ & $0.1$\,kg & pole mass \\
    $\ell$ & $0.5$\,m & pole half-length \\
    $\Delta t$ & $0.02$\,s & sampling period \\
    $|F|$ & $\leq 20$\,N & control bound on horizontal force \\
    $\mu_c^{\mathrm{base}}$ & $5{\times}10^{-4}$ & base cart friction coefficient setting \\
    $\mu_p$ & $2{\times}10^{-6}$ & pole friction coefficient\\
    $\omega$ (TV) & $1$\,rad\,s$^{-1}$ & frequency in $\mu_c(t) = \mu_c^{\mathrm{base}} + \sin(\omega t)$ \\
    \bottomrule
  \end{tabular*}
\end{table*}
 
\subsection{RSCP: Reactor--Separator Process}
\label{app:rscp}
 
RSCP is a reactor--separator process consisting of two continuous
stirred-tank reactors (CSTR-1 and CSTR-2) operating in series, feeding a
flash separator whose liquid bottoms are partially recycled to
CSTR-1~\citep{liu2008distributed}. The system was adopted as the MamKO
benchmark in~\citet{li2025mamko}. Two parallel first-order reactions
$A \to B \to C$ proceed in each CSTR with Arrhenius kinetics; the separator
performs ideal vapor--liquid equilibrium at fixed pressure with no reaction.
The state vector is
\[ x = (x_{A,1},\, x_{B,1},\, T_1,\, x_{A,2},\, x_{B,2},\, T_2,\, x_{A,3},\, x_{B,3},\, T_3), \]
where $x_{A,i}, x_{B,i}$ are the mass fractions of species $A, B$ in
vessel $i$ and $T_i$ is the corresponding temperature in Kelvin; vessel
indices $1, 2$ refer to the two CSTRs and $3$ to the flash separator. The
control is $u = (Q_1, Q_2, Q_3)$, the heat duties applied to each vessel in
kJ\,h$^{-1}$.
 
\paragraph{Governing ODEs.}
The mass and energy balances follow~\citet[Section 2.4]{chen2022dissertation}
rather than the formulation in the original Liu papers; see ``Reproducibility
notes'' below for the discrepancies that motivated this choice. Letting
$F_1 = F_{10} + F_r$ and $F_2 = F_1 + F_{20}$ denote the total throughput of
each CSTR, $r_{1,i} = k_1 \exp(-E_1/(RT_i))$ and
$r_{2,i} = k_2 \exp(-E_2/(RT_i))$ the Arrhenius rates, and
$\beta_j = (-\Delta H_j) C_M / (\rho c_p)$ the heat-of-reaction coefficients
(see paragraph below), the governing equations for CSTR-1 and CSTR-2 are
\begin{align*}
  \dot{x}_{A,1} &= \tfrac{F_{10}}{V_1}(x_{A,10} - x_{A,1})
                  + \tfrac{F_r}{V_1}(x_{A,r} - x_{A,1}) \\
                & \quad - r_{1,1} x_{A,1}, \\
  \dot{x}_{B,1} &= \tfrac{F_{10}}{V_1}(x_{B,10} - x_{B,1})
                  + \tfrac{F_r}{V_1}(x_{B,r} - x_{B,1}) \\
                & \quad + r_{1,1} x_{A,1} - r_{2,1} x_{B,1}, \\
   \dot{T}_1     &= \tfrac{F_{10}}{V_1}(T_{10} - T_1)
                   + \tfrac{F_r}{V_1}(T_3 - T_1) \\
                 & \quad + \beta_1 r_{1,1} x_{A,1}
                        + \beta_2 r_{2,1} x_{B,1} \\
                 & \quad + \tfrac{Q_1}{\rho c_p V_1},
\end{align*}
and analogously for CSTR-2 with $(F_1, F_{20}, V_2, T_{20}, Q_2)$ replacing
$(F_{10}, F_r, V_1, T_{10}, Q_1)$ and the previous-vessel state replacing
the recycle stream. The flash separator carries no reaction:
\begin{align*}
  \dot{x}_{A,3} &= \tfrac{F_2}{V_3}(x_{A,2} - x_{A,3})
                  - \tfrac{F_r + F_p}{V_3}(x_{A,r} - x_{A,3}), \\
  \dot{x}_{B,3} &= \tfrac{F_2}{V_3}(x_{B,2} - x_{B,3})
                  - \tfrac{F_r + F_p}{V_3}(x_{B,r} - x_{B,3}), \\
   \dot{T}_3     &= \tfrac{F_2}{V_3}(T_2 - T_3) + \tfrac{Q_3}{\rho c_p V_3} \\
                 & \quad + \tfrac{(F_r + F_p) C_M}{\rho c_p V_3}
                   \bigl(x_{A,r} \Delta H_{\mathrm{vap},A} \\
                 & \quad\quad + x_{B,r} \Delta H_{\mathrm{vap},B}
                              + x_{C,r} \Delta H_{\mathrm{vap},C}\bigr).
\end{align*}
The recycle compositions $x_{A,r}, x_{B,r}, x_{C,r}$ are determined by ideal
vapor-liquid equilibrium with relative volatilities $(\alpha_A, \alpha_B,
\alpha_C)$,
\begin{equation}
\begin{aligned}
  x_{A,r} &= \tfrac{\alpha_A x_{A,3}}{D}, \\
  x_{B,r} &= \tfrac{\alpha_B x_{B,3}}{D}, \\
  x_{C,r} &= \tfrac{\alpha_C (1 - x_{A,3} - x_{B,3})}{D}.
\end{aligned}
\end{equation}
\[
  D = \alpha_A x_{A,3} + \alpha_B x_{B,3}
    + \alpha_C (1 - x_{A,3} - x_{B,3}).
\]
The structurally important property for this paper is that the
heat duties $Q_i$ enter the energy balances \emph{additively} as
$+Q_i / (\rho c_p V_i)$ with constant coefficients, contributing no
$u \cdot x$ terms.
 
\paragraph{Parameters.}
\Cref{tab:rscp_params} lists the numerical values used. Volumes, flows, feed
states, kinetic constants, activation energies, heats of reaction, and
relative volatilities are taken from~\citet[Table~2.1]{chen2022dissertation};
the molar concentration factor $C_M = 2$\,kmol\,m$^{-3}$ is the constant
that converts mass fractions to molar concentrations in the energy balance
(see ``Reproducibility notes'').
 
\begin{table*}[ht]
  \centering
  \caption{RSCP parameters (after the corrections in
    \Cref{app:rscp_repro}).}
  \label{tab:rscp_params}
  \footnotesize
  \begin{tabular}{l l p{0.45\columnwidth}}
    \toprule
    Symbol & Value & Description \\
    \midrule
    $V_1, V_2, V_3$ & $1.0,\ 0.5,\ 1.0$\,m$^3$ & vessel volumes \\
    $F_{10}, F_{20}$ & $5.04,\ 5.04$\,m$^3$\,h$^{-1}$ & feed flows \\
    $F_r, F_p$ & $50.4,\ 5.04$\,m$^3$\,h$^{-1}$ & recycle, purge \\
    $T_{10}, T_{20}$ & $300,\ 300$\,K & feed temperatures \\
    $x_{A,10}, x_{A,20}$ & $1.0,\ 1.0$ & feed compositions ($A$) \\
    $k_1, k_2$ & $9.972{\times}10^6,\ 9.36{\times}10^6$\,h$^{-1}$ & kinetic constants \\
    $E_1, E_2$ & $5{\times}10^4,\ 6{\times}10^4$\,kJ\,kmol$^{-1}$ & activation energies \\
    $\Delta H_1, \Delta H_2$ & $-1.2{\times}10^5,\ -1.4{\times}10^5$\,kJ\,kmol$^{-1}$ & reaction enthalpies \\
    $\rho$ & $1000$\,kg\,m$^{-3}$ & density \\
    $c_p$ & $4.2$\,kJ\,kg$^{-1}$\,K$^{-1}$ & specific heat \\
    $C_M$ & $2$\,kmol\,m$^{-3}$ & molar concentration factor \\
    $\Delta H_{\mathrm{vap},A,B,C}$ & $-3.53,\ -1.57,\ -4.07$\,$\times 10^4$\,kJ\,kmol$^{-1}$ & vap.\ enthalpies \\
    $\alpha_A, \alpha_B, \alpha_C$ & $3.5,\ 1.0,\ 0.5$ & relative volatilities \\
    $R$ & $8.314$\,kJ\,kmol$^{-1}$\,K$^{-1}$ & gas constant \\
    \bottomrule
  \end{tabular}
\end{table*}
 
\paragraph{Reproducibility notes.}
\label{app:rscp_repro}
We verified that the RSCP ODE system as written in~\citet{liu2008distributed,
liu2009DistributedMPC} does not reproduce the steady state reported by
those papers under their stated parameter values. The discrepancy resolves
under three corrections drawn from~\citet[Section~2.4]{chen2022dissertation}:
(i) the heat-of-reaction term carries a molar concentration factor
$\beta_j = (-\Delta H_j) C_M / (\rho c_p)$ rather than $\Delta H_j / c_p$,
where $C_M = 2$\,kmol\,m$^{-3}$; (ii) the separator energy balance
includes the convective transport of vaporization enthalpies
$\Delta H_{\mathrm{vap},\{A,B,C\}}$ for the three species;
(iii) the steady-state heat duties $Q_i$ are of order $10^6$\,kJ\,h$^{-1}$
rather than the $10^9$ value listed in~\citet[Table~2.1]{chen2022dissertation},
which we identified as a typographical exponent error. Under the corrected
formulation, forward integration from the published steady state has
$\|\dot{x}\|_\infty < 10^{-6}$ on temperatures and within $4 \times 10^{-3}$
on compositions. We use this corrected system throughout. The published MamKO
implementation of RSCP is not publicly available, so we cannot directly
confirm that our reconstructed system matches theirs; however, the MamKO
paper's reported steady state and our forward-integration check are
mutually consistent under the corrected ODEs.
 
\paragraph{Time-varying variant.}
The TV variant follows the MamKO synthetic time-varying modifier
$\varphi_c(t) = e^{-0.01 t}$~\citep[Appendix C.3]{li2025mamko}, applied
multiplicatively to both Arrhenius rate terms in all three vessels. This
simulates catalyst deactivation: reaction rates decay exponentially with
time, and the model must track operating-point drift induced by the slowly
shrinking effective kinetics. Crucially, $\varphi_c(t)$ multiplies only the
kinetic terms, not the convective or heat-duty terms, so the structural
property that $u_k$ does not couple multiplicatively with $x_k$ is preserved
in the TV variant: RSCP TV exercises the time-varying-parameter axis of
\Cref{sec:intro} without introducing $u \cdot x$ structure.
 
\paragraph{Nominal operating point and controls.}
The MPC tracking task drives the system to the nominal steady state
\begin{multline*}
  x_s = (0.18,\; 0.67,\; 480.32,\; 0.20,\; 0.65, \\
         472.79,\; 0.07,\; 0.67,\; 474.89)
\end{multline*}
following~\citet[Appendix C.3]{li2025mamko}. The control bounds are
$|Q_i - Q_{i,s}| \leq 10^6$\,kJ\,h$^{-1}$ around the nominal heat duties
$Q_{i,s}$; these are enforced by the simulator's action-space clipping
rather than by the QP, since the QP operates in normalized control space
(see~\Cref{app:scp}). The sampling period is $\Delta t = 18$\,s
($= 0.005$\,h).
 
\section{Data Generation and Training Protocol}
\label{app:training}
 
\subsection{Trajectory Generation}
\label{app:data}
 
For each system variant we generate training and test data with the
official MamKO \texttt{ReplayMemory} pipeline. Continuous rollouts under
per-step i.i.d. uniform random control excitation are segmented into
overlapping input--output windows of length
$H_{\mathrm{lookback}} + H_{\mathrm{pred}} = 60$. Episodes are integrated by
forward Euler at the system sampling period, with no sub-stepping:
$\Delta t = 0.02$\,s for CartPole and $\Delta t = 18$\,s for RSCP.
Episodes terminate on physical bound violations or at a fixed horizon,
whichever comes first: for CartPole, when $|\theta| > 20^\circ$ or $|x| > 10$;
for RSCP, when concentration or temperature bounds are violated; for both
systems, at $20{,}040$ steps for training episodes and $1{,}000$ steps for
test episodes. Windows are pooled across episodes until the configured
target is reached: $39{,}900$ training windows, split $80/20$ into
train/validation subsets via a random split with seed $1$, and $4{,}000$
test windows.

Control excitation is drawn independently at each step. For CartPole, the
force is sampled as $F \sim \mathcal{U}(-20, 20)$\,N. For RSCP, each heat
duty is sampled as
$Q_i \sim \mathcal{U}(Q_i^s - 10^6,\ Q_i^s + 10^6)$\,kJ\,h$^{-1}$, where
$Q^s$ denotes the steady-state heating rates. Initial states are sampled at
the start of each episode: for CartPole,
$x_0 \sim \mathcal{U}(-4, 4)$\,m and
$\theta_0 \sim \mathcal{U}(-0.1, 0.1)$\,rad, with zero initial velocities;
for RSCP, components are drawn independently and uniformly within
per-component half-widths around a verified fixed point $\bar{x}$ of the
ODEs,
$x_0 \sim \mathcal{U}(\bar{x} - \boldsymbol{\rho},\ \bar{x} + \boldsymbol{\rho})$
with
$\boldsymbol{\rho} = (0.05,\ 0.05,\ 10\,\mathrm{K},\ 0.05,\ 0.05,\ 10\,\mathrm{K},\ 0.02,\ 0.05,\ 10\,\mathrm{K})$
in the state ordering $(x_{A,i}, x_{B,i}, T_i)$ for $i = 1, 2, 3$. Here
$\bar{x}$ denotes the numerically verified fixed point of the corrected RSCP
ODEs rather than the approximate steady state reported in the MamKO
appendix; we use $\bar{x}$ as the perturbation center to avoid systematic
drift back toward the true equilibrium during data generation. The
data-generation seed for the train/validation split is $1$ and is reported
alongside the released code.
 
\subsection{Model Architecture}
\label{app:arch}
 
The encoder, backbone, and decoder follow the MamKO
configuration~\citep{li2025mamko} for each system class. We re-use the
MamKO reference implementation hyperparameters; only the bilinear extension
introduces new parameters ($G_i$ tensors at full rank $r = d_z$, plus the
spectral penalty weight $\lambda_s$). \Cref{tab:arch} summarizes the per-system
settings.
 
\begin{table}[ht]
  \centering
  \caption{Architecture hyperparameters per system. CartPole settings follow
    the MamKO reference implementation; RSCP settings follow its RSCP
    benchmark configuration. Bilinear-specific entries ($r$, $\lambda_s$,
    discretization scheme) are reported under \emph{Bilinear-only}.}
  \label{tab:arch}
  \footnotesize
  \begin{tabular}{lrr}
    \toprule
                                  & \textbf{CartPole} & \textbf{RSCP} \\
    \midrule
    Lookback window               & $30$ & $30$ \\
    Forecast horizon              & $30$ & $30$ \\
    Latent dimension $\dz$        & $8$ & $15$ \\
    Mamba conv kernel $d_{\mathrm{conv}}$ & $15$ & $5$ \\
    Hidden dimension              & $64$ & $64$ \\
    \midrule
    \multicolumn{3}{l}{\emph{Bilinear-only}} \\
    Bilinear rank $r$             & $8$ & $15$ \\
    Spectral penalty $\lambda_s$  & $0.01$ & $0.01$ \\
    Discretization                & \texttt{lie\_trotter} & \texttt{lie\_trotter} \\
    \bottomrule
  \end{tabular}
\end{table}
 
The $G_i$ tensors are factorized as $G_i = L_i R_i^\top$ with
$L_i, R_i \in \R^{\dz \times r}$, both initialized to zero. Zero initialization
makes the bilinear model an exact copy of the Linear baseline at training step zero, so any
training-time divergence between the two is attributable solely to the
bilinear terms.
 
\subsection{Lie--Trotter Discretization Details}
\label{app:lie_trotter_details}

The baseline MamKO diagonal step is preserved exactly by evaluating the
diagonal flow element-wise:
\[
  [E_D]_{nn} = \exp(a_n \delta_n).
\]
The associated diagonal input integral is computed element-wise as
\begin{equation}
  [\bar{B}_{\mathrm{diag},k}]_{n,j}
  = \frac{\exp(a_n \delta_n) - 1}{a_n}\,[B_k^{(c)}]_{n,j}.
  \label{eq:bdiag}
\end{equation}
In implementation, the numerator is evaluated with \texttt{torch.expm1} to
avoid catastrophic cancellation as $a_n \to 0$, with a small sign-preserving
denominator offset used only for numerical stability.

The coupling factor $E_P(u_k)$ is evaluated with a single $\dz \times \dz$
\texttt{torch.linalg.matrix\_exp} call, yielding the discrete-time matrices
\begin{equation}
\begin{aligned}
  A_{\mathrm{disc},k} &= E_P(u_k) E_D, \\
  B_{\mathrm{disc},k} &= E_P(u_k) \, \bar{B}_{\mathrm{diag},k}.
\end{aligned}
\label{eq:lie_trotter_disc}
\end{equation}
Thus the diagonal ZOH step costs $O(\dz)$ and the coupling step costs
$O(\dz^3)$, rather than the augmented $(\dz + m) \times (\dz + m)$ matrix
exponential required by a dense ZOH implementation that jointly recovers
$A_{\mathrm{disc},k}$ and $B_{\mathrm{disc},k}$. For $\dz \leq 64$, this
overhead remains negligible relative to backbone inference.

\subsection{Optimization}
\label{app:optim}
 
All models are trained with Adam \citep{kingma2014adam} for
$401$ epochs at initial learning rate $1\mathrm{e}{-}3$,
weight decay $1\mathrm{e}{-}3$, batch size $256$, and a step learning-rate schedule (step size $50$ epochs, $\gamma = 0.9$). Gradient
clipping is applied at $1.0$. Best-validation-loss checkpoints
are retained for evaluation. Due to compute constraints, results are reported from a single seed per
configuration; per-cell dispersion is reported across the 10 closed-loop
episodes and across the final 50 training epochs (mean$_{50}$ in \Cref{tab:forecast}). Multi-seed validation is deferred to a follow-up. Training
ran in FP32 throughout; mixed precision was not used.
 
The training loss is the multi-step observation-space prediction loss of
\Cref{eq:loss}, with the spectral penalty $\lambda_s
\mathcal{L}_{\mathrm{stab}}$ disabled at inference.
 
\subsection{Hardware and Wall-Clock}
\label{app:hardware}
 
Training runs were executed on NVIDIA A100 GPUs on GCE
\texttt{a2-ultragpu-1g} instances. Per-run training wall-clock was
approximately $4.8$ hours for CartPole and $9.75$ hours for RSCP
(bilinear Lie--Trotter models; corresponding linear baselines train roughly
$6\times$ faster, at $\sim\!0.7$ hours per run on either system). Total
compute budget for the experiments reported in this paper, including
ablations and the lead-time sweep, was approximately $58$ GPU-hours
(training: $32$\,h across $8$ runs; closed-loop MPC
\texttt{regen=never} d-sweeps: $26$\,h across $4$ sweeps).
 
\section{Closed-Loop MPC Tasks}
\label{app:mpc}
 
This appendix specifies the closed-loop MPC problems evaluated in
\Cref{sec:tv_capture,sec:other_cells}: cost functions, control bounds,
warmup protocol, lead-time scheduling, and SCP hyperparameters.
 
\subsection{Common Setup}
\label{app:mpc_common}
 
For all systems we use prediction horizon $H_{\mathrm{MPC}} = 30$, apply
horizon $1$ (standard receding-horizon MPC at $d = 0$), and $10$ episodes
per configuration. Each episode is $1{,}000$ MPC steps. Following
MamKO Eq.\ 10a, the receding-horizon objective at time $k$ is
\[
\hat x_{j|k} := C_{j|k}\hat z_{j|k},
\]
\begin{equation}
\begin{aligned}
  J_k
  &= \sum_{j=k+1}^{k+H-1}
    \Bigl(
      \|\hat x_{j|k} - x_{\mathrm{ref}}\|_Q^2
      + \|u_{j|k} - u_{j-1|k}\|_R^2
    \Bigr) \\
  &\quad + \|\hat x_{k+H|k} - x_{\mathrm{ref}}\|_P^2.
\end{aligned}
\label{eq:mpc_cost_appendix}
\end{equation}
that is, a quadratic tracking term on the predicted decoded state together
with a $\Delta u$ smoothness penalty, plus a terminal tracking cost. The
QP is built by dynamics elimination over the $H_{\mathrm{MPC}}$-step
horizon and solved by OSQP~\citep{stellato2020osqp}.
 
\paragraph{Warmup.}
Following the MamKO evaluation protocol, the lookback buffer is initialized
synthetically by repeating the simulator's reset state $H$ times with zero
control. MPC begins at simulator step $t = 0$; no real warmup rollout is
performed. This convention matches MamKO's published evaluation
loop~\citep{li2025mamko} and is the operating point under which the
forecasting and closed-loop numbers in the main text are reported.
 
\subsection{CartPole TV}
\label{app:mpc_cp}
 
Reference: upright stabilization at $x_{\mathrm{ref}} = (0, 0, 0, 0)$ under
sinusoidal cart friction (\Cref{app:cartpole}). Stage and terminal cost
weights are
\begin{align*}
  Q &= \mathrm{diag}(1.0,\ 0.01,\ 100.0,\ 0.01), \\
  R &= 0.5, \\
  P &= \mathrm{diag}(5000,\ 0,\ 0,\ 0),
\end{align*}
matching the MamKO reference configuration~\citep{li2025mamko}. The heavy
weighting on $\theta$ (state index 3) reflects the priority of pole-angle
stabilization over cart-position regulation, and the terminal cost
penalizes only cart position. The control bound is $|F| \leq 20$\,N. There
is no safety constraint; the trajectory is unconstrained over the action
range.
 
\subsection{RSCP (TI, TV)}
\label{app:mpc_rscp}
 
Reference: setpoint tracking to the nominal steady state $x_s$ of
\Cref{app:rscp}. The cost weights heavily emphasize composition tracking
over temperature tracking:
\begin{align*}
  Q &= \mathrm{diag}(10^4,\ 10^4,\ 1,\ 10^4,\ 10^4,\ 1,\ 10^4,\ 10^4,\ 1), \\
  R &= 5\times10^{-12} \cdot I_3, \\
  P &= Q.
\end{align*}
The asymmetric weighting follows the MamKO RSCP configuration: the relevant
operational quantities are the species mass fractions, while the
temperature states float to whatever the heat duties drive them to. The
small $R$ value reflects the magnitude of the heat duties ($u \sim 10^6$
in raw units): $R = 5\times10^{-12}$ balances the
$\|\Delta u\|^2 \sim 10^{12}$ contribution against the
$\|x - x_s\|^2 \sim O(1)$ tracking term. Control bounds are
$|Q_i - Q_{i,s}| \leq 10^6$\,kJ\,h$^{-1}$ around the nominal duties; these
are enforced as action-space clipping in the simulator after the QP
returns its result in normalized control space (see~\Cref{app:scp}).
 
\subsection{SCP Hyperparameters}
\label{app:scp}
 
The SCP controller of Algorithm~\ref{alg:scp} runs with initial trust-region
radius $\varepsilon_0 = 1.0$ (in normalized control space), shrinkage
factor $1/2$ on cost-increase steps, and maximum SCP iterations
$N_{\mathrm{SCP}} \in \{1, 5\}$ as reported in \Cref{sec:tv_capture}. The
inner QP is solved by OSQP~\citep{stellato2020osqp} with default tolerances,
warm-started from the previous receding-horizon call's solution to amortize
solver iterations across the closed-loop trajectory. The QP is built in
\emph{normalized} control space using the MamKO instance-normalization
statistics output by the dynamics-generation network at each MPC step;
control bounds and the reference state are normalized correspondingly. After
the QP returns, the optimal control is denormalized and clipped to the
simulator's raw-space action bounds before application.
 
\subsection{Lead-Time Sweep}
\label{app:leadtime}
 
The lead-time experiment of \Cref{sec:tv_capture} sweeps the commitment
window $d \in \{0, 1, 3, 5\}$ on both RSCP TV and CartPole TV with
$N_{\mathrm{SCP}} = 5$ and $10$ episodes per $d$. The protocol is implemented by holding both the
optimizer's plan and the backbone's dynamics output fixed during the lead
window: at each MPC call the controller commits to the next $d$ controls
from a queue, applies the next queued control to the simulator, and refills
the queue from the freshly solved plan only after the queue empties; during
this window the backbone is \emph{not} re-evaluated, so the $\bar{A}_k$,
$\bar{B}_k$ matrices used for any auxiliary roll-forward are the ones
generated at the start of the window. We refer to this as
\texttt{regen=never} in our implementation; it isolates the
open-loop correction capacity of the model class itself, with neither model
receiving intermediate sensor or backbone updates within the lead window.
The $d = 10$ point is generated but excluded from the analysis in the main
text on grounds of discretization-saturation: at the RSCP sampling period
of $18$\,s, $d = 10$ corresponds to three minutes of stale plan, beyond the
regime in which the underlying linearization is informative (cumulative
costs converge for both models, see \Cref{tab:lead_time_full}). The exact
end-of-horizon values reported in \Cref{fig:lead_time_rscp,fig:lead_time_cartpole}
are tabulated in \Cref{tab:lead_time_full,tab:lead_time_cartpole_full} for
RSCP TV and CartPole TV, respectively.
 
\begin{table*}[!t]
  \centering
  \caption{Cumulative log-cost at end of horizon, RSCP TV, lead-time sweep
    under \texttt{regen=never} with $N_{\mathrm{SCP}} = 5$ and
    $10$ episodes per cell. Lower is better.}
  \label{tab:lead_time_full}
  \begin{tabular}{lrrrr}
    \toprule
    & $d = 0$ & $d = 1$ & $d = 3$ & $d = 5$ \\
    \midrule
    Linear    & $-2.3233$ & $-1.8717$ & $-2.0559$ & $-2.2037$ \\
    Bilinear-SCP-5 & $-2.5442$ & $-2.7764$ & $-2.5662$ & $-2.4252$ \\
    \bottomrule
  \end{tabular}
\end{table*}
\vspace{-0.75em}
\begin{table*}[!t]
  \centering
  \caption{Cumulative log-cost at end of horizon, CartPole TV, lead-time sweep
    under \texttt{regen=never} with $N_{\mathrm{SCP}} = 5$ and
    $10$ episodes per cell. Lower is better.}
  \label{tab:lead_time_cartpole_full}
  \begin{tabular}{lrrrr}
    \toprule
    & $d = 0$ & $d = 1$ & $d = 3$ & $d = 5$ \\
    \midrule
    Linear    & $-1.9119$ & $-1.8033$ & $-1.6527$ & $-1.4707$ \\
    Bilinear-SCP-5 & $-1.9450$ & $-1.9086$ & $-1.7219$ & $-1.4972$ \\
    \bottomrule
  \end{tabular}
\end{table*}
\vspace{-0.75em}
\begin{table*}[!t]
  \centering
  \caption{Learned bilinear coupling norm $\|G\|_F$ at training convergence,
    per cell, with the qualitative shape of the training trajectory.}
  \label{tab:gi_norms}
  \footnotesize
  \begin{minipage}{0.72\textwidth}
    \centering
    \setlength{\tabcolsep}{8pt}
    \begin{tabular*}{\linewidth}{@{\extracolsep{\fill}}lcl}
      \toprule
      \textbf{Cell} & $\|G\|_F$ & \textbf{Trajectory shape} \\
      \midrule
      CartPole TI & $0.36$ & Peak $0.48$ at ep.\ $60$, decays \\
      CartPole TV & $0.57$ & Monotonic growth, recruitment \\
      RSCP TI     & $0.76$ & Plateau, slack absorption \\
      RSCP TV     & $0.44$ & Monotonic decay, multi-context \\
      \bottomrule
    \end{tabular*}
  \end{minipage}
\end{table*}

\section{Coupling Strength Diagnostic}
\label{app:gi_norms}

\Cref{tab:gi_norms} reports the learned $\|G\|_F$ at training convergence
across the four cells, alongside the qualitative shape of the training
trajectory. The four cells exhibit distinct training trajectories that
together provide an interpretive scaffold for the empirical results in the
main text.

The CartPole TI trajectory rises early to $\|G\|_F \approx 0.48$, then
decays under spectral regularization, settling at $0.36$. The bilinear
capacity is genuinely used---consistent with the strict forecasting gain
on this cell---but constrained from over-recruitment.

CartPole TV grows monotonically to $0.57$. The progressive recruitment
matches the time-varying nature of the perturbed dynamics: as training
proceeds, the model leans on $G$ to absorb friction-coefficient drift
that the linear lift cannot track within a single MPC horizon.

RSCP TI plateaus high at $0.76$, but the closed-loop and forecast results
show no clear advantage of the bilinear model on this cell. We interpret
this as the optimizer absorbing residual training slack into $G$, which
has no physical target on this cell ($u \cdot x$ is structurally absent
in the ODE). \emph{$\|G\|_F$ alone should not be read as evidence of
bilinear strength}: its size on RSCP TI is set by training-time slack,
not by physics.

RSCP TV decays monotonically from an early peak to $0.44$. Under
time-varying parameters, $G$ must serve simultaneously across the family
of operating regimes induced by the modulation; the optimizer responds
by shrinking $G$ toward a setting that works on average rather than
overfitting to any single regime. This is consistent
with~\Cref{sec:tv_capture}: Bilinear-SCP-$5$, which iteratively
re-linearizes the operator at the current operating point, outperforms
Bilinear-SCP-$1$ on this cell because the average $G$ is what the model
carries while iterative SCP is what extracts operating-point-specific
use of it.

The four trajectories provide an interpretive scaffold rather than a
predictive theory: $\|G\|_F$ is the empirical witness for the structural
condition in \Cref{prop:forecasting} (CartPole TI), the recruitment
indicator under TV physics (CartPole TV, RSCP TV), and the
slack-absorption diagnostic where no physical target exists (RSCP TI).
The proposition does not predict trajectory shape; the trajectories make
sense in light of it.

\end{document}